\documentclass{article}
\pdfoutput=1

\usepackage[final]{neurips_2018}
\usepackage{algorithm}
\usepackage[noend]{algpseudocode}
\usepackage{amsmath,amssymb,amsthm,enumerate}
\usepackage{color}
\newcommand{\Span}{{\mathrm{span}}}
\newcommand{\dist}{{\mathrm{dist}}}

\newcommand{\vect}{{\mathbf z}}
\newcommand{\calX}{{\mathcal X}}
\newcommand{\calY}{{\mathcal Y}}
\newcommand{\calS}{{\mathcal S}}

\algrenewcommand\algorithmicrequire{\textbf{Input:}}
\algrenewcommand\algorithmicensure{\textbf{Output:}}
\newcommand{\W}{\mathbf{w}}
\algnewcommand\TRUE{\textbf{true}}
\algnewcommand\FALSE{\textbf{false}}
\algnewcommand\AND{\textbf{and}}



\usepackage[utf8]{inputenc} 
\usepackage[T1]{fontenc}    
\usepackage{hyperref}       
\usepackage{url}            
\usepackage{booktabs}       
\usepackage{amsfonts}       
\usepackage{nicefrac}       
\usepackage{microtype}      
\usepackage{graphicx}       
\usepackage{subcaption}     
\usepackage{appendix}
\usepackage{hyperref}

\newtheorem{theorem}{Theorem}
\newtheorem{definition}[theorem]{Definition}

\newtheorem{lemma}[theorem]{Lemma}

\newtheorem{claim}[theorem]{Claim}

\newtheorem{remark}{Remark}
\title{Towards Understanding Learning Representations: To What Extent Do Different Neural Networks Learn the Same Representation}

%

\author{
Liwei Wang$^{1,2}$ 
\quad Lunjia Hu$^{3}$
\quad Jiayuan Gu$^{1}$
\quad Yue Wu$^{1}$ \\
\quad \textbf{Zhiqiang Hu}$^{1}$ 
\quad \textbf{Kun He}$^{4}$ 
\quad \textbf{John Hopcroft}$^{5}$ \\
$^{1}$Key Laboratory of Machine Perception, MOE, School of EECS, Peking University \\
  $^{2}$Center for Data Science, Peking University, Beijing Institute of Big Data Research \\
  $^{3}$Computer Science Department, Stanford University \\ 
  $^{4}$Huazhong University of Science and Technology \\
  $^{5}$Cornell University \\
  {\texttt{wanglw@cis.pku.edu.cn}} \quad
  {\texttt{lunjia@stanford.edu}} \\
  {\texttt{\{gujiayuan, frankwu, huzq\}@pku.edu.cn}} \\
  {\texttt{brooklet60@hust.edu.cn, jeh17@cornell.edu}} \\
}

\begin{document}

\maketitle

\begin{abstract}
It is widely believed that learning good representations is one of the main reasons for the success of deep neural networks. Although highly intuitive, there is a lack of theory and systematic approach quantitatively characterizing what representations do deep neural networks learn. In this work, we move a tiny step towards a theory and better understanding of the representations. Specifically, we study a simpler problem: How similar are the representations learned by two networks with identical architecture but trained from different initializations.  We develop a rigorous theory based on the neuron activation subspace match model. The theory gives a complete characterization of the structure of neuron activation subspace matches, where the core concepts are maximum match and simple match which describe the overall and the finest similarity between sets of neurons in two networks respectively. We also propose efficient algorithms to find the maximum match and simple matches. Finally, we conduct extensive experiments using our algorithms. Experimental results suggest that, surprisingly, representations learned by the same convolutional layers of networks trained from different initializations are not as similar as prevalently expected, at least in terms of subspace match.
\footnote{The codes are available on \url{https://github.com/MeckyWu/subspace-match}}
\end{abstract}
\section{Introduction}
\label{sec:intro}
It is widely believed that learning good representations is one of the main reasons for the success of deep neural networks \citep{krizhevsky2012imagenet,he2016deep}. Taking CNN as an example, filters, shared weights, pooling and composition of layers are all designed to learn good representations of images. Although highly intuitive, it is still illusive what representations do deep neural networks learn.

In this work, we move a tiny step towards a theory and a systematic approach that characterize the representations learned by deep nets. In particular, we consider a simpler problem: How similar are the representations learned by two networks with identical architecture but trained from different initializations. It is observed that training the same neural network from different random initializations frequently yields similar performance \citep{dauphin2014identifying}.  A natural question arises: do the differently-initialized networks learn similar representations as well, or do they learn totally distinct representations, for example describing the same object from different views?  Moreover, what is the granularity of similarity: do the representations exhibit similarity in a local manner, i.e. a single neuron is similar to a single neuron in another network, or in a distributed manner, i.e. neurons aggregate to clusters that collectively exhibit similarity?  The questions are central to the understanding of the representations learned by deep neural networks, and may shed light on the long-standing debate about whether network representations are local or distributed.

\citet{li2016convergent} studied these questions from an empirical perspective.  Their approach breaks down the concept of similarity into one-to-one mappings, one-to-many mappings and many-to-many mappings, and probes each kind of mappings by ad-hoc techniques.  Specifically, they applied linear correlation and mutual information analysis to study one-to-one mappings, and found that some core representations are shared by differently-initialized networks, but some rare ones are not; they applied a sparse weighted LASSO model to study one-to-many mappings and found that the whole correspondence can be decoupled to a series of correspondences between smaller neuron clusters; and finally they applied a spectral clustering algorithm to find many-to-many mappings.  


Although \citet{li2016convergent} provide interesting insights, their approach is somewhat heuristic, especially for one-to-many mappings and many-to-many mappings.  We argue that a systematic investigation may deliver a much more thorough comprehension.  To this end, we develop a rigorous theory to study the questions.  We begin by modeling the similarity between neurons as the matches of subspaces spanned by activation vectors of neurons.  The activation vector \citep{raghu2017svcca} shows the neuron’s responses over a finite set of inputs, acting as the representation of a single neuron.\footnote{\citet{li2016convergent} also implicitly used the activation vector as the neuron’s representation.} Compared with other possible representations such as the weight vector, the activation vector characterizes the essence of the neuron as an input-output function, and takes into consideration the input distribution.  Further, the representation of a neuron cluster is represented by the subspace spanned by activation vectors of neurons in the cluster. The subspace representations derive from the fact that activations of neurons are followed by affine transformations; two neuron clusters whose activations differ up to an affine transformation are essentially learning the same representations.

In order to develop a thorough understanding of the similarity between clusters of neurons, we give a complete characterization of the structure of the neuron activation subspace matches.  We show the unique existence of the maximum match, and we prove the Decomposition Theorem: every match can be decomposed as the union of a set of simple matches, where simple matches are those which cannot be decomposed any more.  The maximum match characterizes the whole similarity, while simple matches represent minimal units of similarity, collectively giving a complete characterization.  Furthermore, we investigate how to characterize these simple matches so that we can develop efficient algorithms for finding them.

Finally, we conduct extensive experiments using our algorithms. We analyze the size of the maximum match and the distribution of sizes of simple matches. It turns out, contrary to prevalently expected, representations learned by almost all convolutional layers exhibit very low similarity in terms of matches. We argue that this observation reflects the current understanding of learning representation is limited. 


Our contributions are summarized as follows.
\begin{enumerate}
\setlength{\itemsep}{0pt}
\setlength{\parsep}{0pt}
\setlength{\parskip}{0pt}
\item We develop a theory based on the neuron activation subspace match model to study the similarity between representations learned by two networks with identical architecture but trained from different initializations. We give a complete analysis for the structure of matches.
\item We propose efficient algorithms for finding the maximum match and the simple matches, which are the central concepts in our theory.
\item Experimental results demonstrate that representations learned by most convolutional layers exhibit low similarity in terms of subspace match.
\end{enumerate}

The rest of the paper is organized as follows.  In Section \ref{sec:preli} we formally describe the neuron activation subspace match model.  Section \ref{sec:theory} will present our theory of neuron activation subspace match.  Based on the theory, we propose algorithms in Section \ref{sec:alg}.  In Section \ref{sec:experiments} we will show experimental results and make analysis.  Finally, Section \ref{sec:conclusion} concludes. Due to the limited space, all proofs are given in the supplementary. 

\section{Preliminaries}
\label{sec:preli}

In this section, we will formally describe the neuron activation subspace match model that will be analyzed throughout this paper. Let $\mathcal{X}$ and $\mathcal{Y}$ be the set of neurons in the same layer\footnote{In this paper we focus on neurons of the same layer. But the method applies to an arbitrary set of nerons.} of two networks with identical architecture but trained from different initializations.  Suppose the networks are given $d$ input data $a_1,a_2,\cdots,a_d$. For $\forall v\in \mathcal{X}\cup \mathcal{Y}$, let the output of neuron $v$ over $a_i$ be $z_v(a_i)$. The representation of a neuron $v$ is measured by the \emph{activation vector} \citep{raghu2017svcca} of the neuron $v$ over the $d$ inputs, $\vect_v:=(z_v(a_1),z_v(a_2),\cdots,z_v(a_d))$. For any subset $X\subseteq \mathcal{X}$, we denote the vector set $\{\vect_x:x\in X\}$ by $\vect_{X}$ for short.  The representation of a subset of neurons $X\subseteq \mathcal{X}$ is measured by the subspace spanned by the activation vectors of the neurons therein, $\Span(\vect_{X}) := \{\sum\limits_{\mathbf{\vect}_x\in \vect_{X}}\lambda_{\mathbf \vect_x}\mathbf \vect_x:\forall \lambda_{\mathbf \vect_x }\in\mathbb R\}$.  Similarly for $Y\subseteq \mathcal{Y}$.  In particular, the representation of an empty subset is $\Span(\emptyset) := \{\mathbf 0\}$, where $\mathbf 0$ is the zero vector in $\mathbb R^d$.

The reason why we adopt the neuron activation subspace as the representation of a subset of neurons is that activations of neurons are followed by affine transformations.  For any neuron $\tilde{x}$ in the following layer of $\mathcal{X}$, we have $z_{\tilde{x}}(a_i) = \mathrm{ReLU}(\sum_{x\in \mathcal{X}} w_x z_x(a_i) + b)$, where $\{w_x: x\in \mathcal{X}\}$ and $b$ are the parameters.  Similarly for neuron $\tilde{y}$ in the following layer of $\mathcal{Y}$.  If $\Span(\vect_{X})=\Span(\vect_{Y})$, for any $\{w_x: x\in X\}$ there exists $\{w_y: y\in Y\}$ such that $\forall a_i, \sum_{x\in X} w_x z_x(a_i) = \sum_{y\in Y} w_y z_y(a_i)$, and vice versa.  Essentially $\tilde{x}$ and $\tilde{y}$ receive the same information from either $X$ or $Y$.

We now give the formal definition of a match.

\begin{definition}[$\epsilon$-approximate match and exact match]
Let $X\subseteq \mathcal{X}$ and $Y\subseteq \mathcal{Y}$ be two subsets of neurons. $\forall \epsilon\in[0,1)$, we say  $(X,Y)$ forms an $\epsilon$-approximate match in $(\calX,\calY)$, if
\vspace{-10pt}
\begin{enumerate}
\setlength{\itemsep}{0pt}
\setlength{\parsep}{0pt}
\setlength{\parskip}{0pt}
\item $\forall x\in X,\dist(\vect_x,\Span(\vect_{Y}))\leq \epsilon|\vect_x|$,
\item $\forall y\in Y,\dist(\vect_y,\Span(\vect_{X}))\leq \epsilon|\vect_y|$.
\end{enumerate}
Here we use the $L_2$ distance: for any vector $\vect$ and any subspace $S$, $\dist(\vect,S)=\min_{\vect'\in 
S}\|\vect-\vect'\|_2$. We call a $0$-approximate match an exact match. Equivalently, $(X,Y)$ is an exact match if $\Span(\vect_{X})=\Span(\vect_{Y})$.
\end{definition}

\section{A Theory of Neuron Activation Subspace Match}
\label{sec:theory}
In this section, we will develop a theory which gives a complete characterization of the neuron activation subspace match problem. For two sets of neurons $\mathcal{X},\mathcal{Y}$ in two networks, we show the structure of all the matches $(X,Y)$ in $(\calX,\calY)$. It turns out that every match $(X,Y)$ can be decomposed as a union of \emph{simple} matches, where a simple match is an atomic match that cannot be decomposed any further.

Simple match is the most important concept in our theory. If there are many one-to-one simple matches (i.e. $|X|=|Y|=1$) , it implies that the two networks learn very similar representations at the neuron level. On the other hand, if all the simple matches have very large size (i.e. $|X|,|Y|$ are both large), it is reasonable to say that the two networks learn different representations, at least in details.

We will give mathematical characterization of the simple matches. This allows us to design efficient algorithms finding out the simple matches (Sec.\ref{sec:alg}). The structures of exact and approximate match are somewhat different. In Section \ref{subsec:exact}, we present the simpler case of exact match, and in Section \ref{subsec:approximate}, we describe the more general $\epsilon$-approximate match. Without being explicitly stated, when we say match, we mean $\epsilon$-approximate match.

We begin with a lemma stating that matches are closed under union.
\begin{lemma}[Union-Close Lemma]
\label{claim:union}
Let $(X_1,Y_1)$ and $(X_2,Y_2)$ be two $\epsilon$-approximate matches in $(\mathcal{X}, \mathcal{Y})$. Then $(X_1\cup X_2,Y_1\cup Y_2)$ is still an $\epsilon$-approximate match.
\end{lemma}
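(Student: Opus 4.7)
The plan is to verify the two conditions in the definition of $\epsilon$-approximate match directly for the pair $(X_1 \cup X_2, Y_1 \cup Y_2)$, exploiting the basic monotonicity fact that the distance from a point to a subspace can only decrease when the subspace is enlarged. Since the two required conditions are symmetric in $X$ and $Y$, it suffices to verify the first one; the second follows by the identical argument with the roles swapped.

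For the first condition, I would take an arbitrary $x \in X_1 \cup X_2$ and split into cases: without loss of generality $x \in X_1$ (the case $x \in X_2$ is handled identically using $(X_2, Y_2)$). Since $(X_1, Y_1)$ is an $\epsilon$-approximate match, I already have $\dist(\vect_x, \Span(\vect_{Y_1})) \leq \epsilon |\vect_x|$. The key observation is then that $\vect_{Y_1} \subseteq \vect_{Y_1 \cup Y_2}$, so $\Span(\vect_{Y_1}) \subseteq \Span(\vect_{Y_1 \cup Y_2})$, and the infimum defining the $L_2$ distance is taken over a larger set, giving
\[
\dist(\vect_x, \Span(\vect_{Y_1 \cup Y_2})) \;\leq\; \dist(\vect_x, \Span(\vect_{Y_1})) \;\leq\; \epsilon |\vect_x|.
\]
Combining the two cases yields the first condition for every $x \in X_1 \cup X_2$.

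Applying the same reasoning with $X$ and $Y$ interchanged, i.e.\ using that any $y \in Y_1 \cup Y_2$ lies in some $Y_i$ with $\Span(\vect_{X_i}) \subseteq \Span(\vect_{X_1 \cup X_2})$, takes care of the second condition. Both conditions of the definition hold, so $(X_1 \cup X_2, Y_1 \cup Y_2)$ is an $\epsilon$-approximate match.

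I do not expect any real obstacle here: the statement is essentially a structural observation, and the only substantive ingredient is monotonicity of $\dist(\cdot, S)$ in $S$, which is immediate from the definition of distance as an infimum. No properties of $\epsilon$ beyond $\epsilon \in [0,1)$ are used, and the argument goes through uniformly for the exact case $\epsilon = 0$ as well.
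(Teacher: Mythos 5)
Your proof is correct and is precisely the argument the paper intends when it states that the lemma ``follows immediately from the definition'': the only ingredient is monotonicity of $\dist(\cdot,S)$ under enlarging the spanning set, applied to each $x\in X_1\cup X_2$ and each $y\in Y_1\cup Y_2$. No gap, and no meaningful difference from the paper's (one-line) proof.
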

The fact that matches are closed under union implies that there exists a unique \emph{maximum match}.
\begin{definition}[Maximum Match]
A match $(X^*,Y^*)$ in $(\mathcal{X}, \mathcal{Y})$ is the maximum match if every match $(X,Y)$ in $(\mathcal{X}, \mathcal{Y})$ satisfies $X\subseteq X^*$ and $Y\subseteq Y^*$.
\end{definition}

The maximum match is simply the union of all matches. In Section \ref{sec:alg} we will develop an efficient algorithm that finds the maximum match. 

Now we are ready to give a complete characterization of all the matches. First, we point out that there can be exponentially many matches. Fortunately, every match can be represented as the union of some \emph{simple matches} defined below. The number of simple matches is polynomial for the setting of exact match given $(\vect_x)_{x\in \calX}$ and $(\vect_y)_{y\in\calY}$ being both linearly independent,  and under certain conditions for approximate match as well.

\begin{definition}[Simple Match]
A match $(\hat X,\hat Y)$ in $(\mathcal{X}, \mathcal{Y})$ is a simple match if $\hat X\cup\hat Y$ is non-empty and there exist no matches $(X_i,Y_i)$ in $(\mathcal{X}, \mathcal{Y})$ such that
\vspace{-10pt}
\begin{enumerate}
\setlength{\itemsep}{0pt}
\setlength{\parsep}{0pt}
\setlength{\parskip}{0pt}\item $\forall i, (X_i\cup Y_i) \subsetneq (\hat X\cup \hat Y)$;
\item $\hat X = \bigcup\limits_i X_i, \hat Y = \bigcup\limits_i Y_i$.
\end{enumerate}
\end{definition}

With the concept of the \emph{simple matches}, we will show the Decomposition Theorem: every match can be decomposed as the union of a set of simple matches.  Consequently, simple matches fully characterize the structure of matches.

\begin{theorem}[Decomposition Theorem]
\label{thm:simple}
Every match $(X,Y)$ in $(\mathcal{X}, \mathcal{Y})$ can be expressed as a union of simple matches. Formally, there are simple matches $(\hat X_i,\hat Y_i)$ satisfying $X=\bigcup\limits_i\hat X_i$ and $Y=\bigcup\limits_i\hat Y_i$.
\end{theorem}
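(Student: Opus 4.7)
The plan is to prove the Decomposition Theorem by strong induction on $|X\cup Y|$, exploiting the definition of simple match itself as the source of the recursive step. The Union-Close Lemma will not be needed for existence of the decomposition (only for well-definedness of the maximum match elsewhere), but I will keep it in mind in case I need to glue sub-matches back together.

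First I would dispose of the trivial cases. If $X\cup Y=\emptyset$, then $(X,Y)$ is the empty union of simple matches, so the statement holds vacuously. If $(X,Y)$ is itself a non-empty simple match, then $(X,Y)$ is the one-element union of simple matches, so again the statement holds. This handles the base of the induction.

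For the inductive step, assume $(X,Y)$ is a non-empty match that is \emph{not} simple. Unpacking the definition of simple match, non-simplicity guarantees the existence of a finite family of matches $(X_i,Y_i)$ with $X_i\cup Y_i\subsetneq X\cup Y$ for every $i$, and $X=\bigcup_i X_i$, $Y=\bigcup_i Y_i$. Since the ambient sets $\mathcal X,\mathcal Y$ are finite, each $(X_i,Y_i)$ satisfies the strict inequality $|X_i\cup Y_i|<|X\cup Y|$, so the inductive hypothesis applies and produces simple matches $(\hat X_{ij},\hat Y_{ij})$ with $X_i=\bigcup_j \hat X_{ij}$ and $Y_i=\bigcup_j \hat Y_{ij}$. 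Taking the union of these decompositions across all $i$ yields $X=\bigcup_{i,j}\hat X_{ij}$ and $Y=\bigcup_{i,j}\hat Y_{ij}$, completing the induction.

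The main obstacle I anticipate is simply making sure the induction is genuinely well-founded and that the base case is stated correctly, since the definition of simple match excludes the empty pair. The strict containment $X_i\cup Y_i\subsetneq X\cup Y$ supplied directly by the negation of the simple-match condition is exactly what makes $|X_i\cup Y_i|$ drop, so the induction terminates; and the empty match has to be treated separately because it is a union of zero simple matches rather than a simple match itself. Beyond this bookkeeping, the argument is purely combinatorial and requires no geometric input about subspaces or the parameter $\epsilon$, since the definition of simple match has already absorbed all of that structure.
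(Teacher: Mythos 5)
Your proposal is correct and follows essentially the same route as the paper's own proof: strong induction on $|X\cup Y|$, using the negation of the simple-match definition to obtain strictly smaller matches covering $(X,Y)$ and applying the inductive hypothesis to each. The only difference is your slightly more careful treatment of the base cases (the empty match as an empty union), which the paper glosses over but which does not change the argument.
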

\subsection{Structure of Exact Matches}
\label{subsec:exact}
The main goal of this and the next subsection is to understand the simple matches. The definition of simple match only tells us it cannot be decomposed. But how to find the simple matches? How many simple matches exist? We will answer these questions by giving a characterization of the simple match. Here we consider the setting of exact match, which has a much simpler structure than approximate match.

An important property for exact match is that matches are closed under intersection. 
\begin{lemma}[Intersection-Close Lemma]
\label{lm:unionintersection}
Assume $(\vect_x)_{x\in \calX}$ and $(\vect_y)_{y\in \calY}$ are both linearly independent. Let $(X_1,Y_1)$ and $(X_2,Y_2)$ be exact matches in $(\mathcal{X}, \mathcal{Y})$. Then, $(X_1\cap X_2,Y_1\cap Y_2)$ is still an exact match.
\end{lemma}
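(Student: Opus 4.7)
The plan is to reduce the lemma to a standard linear-algebra fact about independent families: if $(\vect_x)_{x\in\calX}$ is linearly independent, then for any $A,B\subseteq\calX$,
\[
\Span(\vect_{A})\cap \Span(\vect_{B}) \;=\; \Span(\vect_{A\cap B}).
\]
The inclusion ``$\supseteq$'' is trivial. For ``$\subseteq$'', any $v$ in the left-hand side has two representations as linear combinations of $(\vect_x)_{x\in A}$ and $(\vect_x)_{x\in B}$; extending both to combinations over $\calX$ (with zero coefficients outside) and using uniqueness from linear independence, the coefficients must coincide, hence be supported on $A\cap B$. The analogous identity holds for $\calY$ by the independence of $(\vect_y)_{y\in\calY}$.

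Given this, the proof is immediate. Since $(X_1,Y_1)$ and $(X_2,Y_2)$ are exact matches, $\Span(\vect_{X_1})=\Span(\vect_{Y_1})$ and $\Span(\vect_{X_2})=\Span(\vect_{Y_2})$. Intersecting these two equalities gives
\[
\Span(\vect_{X_1})\cap\Span(\vect_{X_2}) \;=\; \Span(\vect_{Y_1})\cap\Span(\vect_{Y_2}).
\]
Applying the lemma above to each side yields $\Span(\vect_{X_1\cap X_2})=\Span(\vect_{Y_1\cap Y_2})$, which is precisely the definition of an exact match for the pair $(X_1\cap X_2,\,Y_1\cap Y_2)$.

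The main (and only) obstacle is verifying the auxiliary identity $\Span(\vect_{A})\cap\Span(\vect_{B})=\Span(\vect_{A\cap B})$ carefully, since the lemma fails without linear independence: e.g.\ if $\vect_{x_1}=\vect_{x_2}$ then $\Span(\vect_{\{x_1\}})\cap\Span(\vect_{\{x_2\}})$ is a line while $\Span(\vect_{\emptyset})=\{\mathbf 0\}$. This is exactly where the hypothesis of the lemma is used, and it is also the reason the intersection-closure property cannot be expected for approximate matches (treated separately in the next subsection). Once the identity is in hand, the rest is a one-line set-theoretic manipulation.
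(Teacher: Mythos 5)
Your proposal is correct and follows essentially the same route as the paper: the paper also reduces the lemma to the identity $\Span(\vect_{X_1})\cap\Span(\vect_{X_2})=\Span(\vect_{X_1\cap X_2})$ for linearly independent families (its Claim in the supplementary), proved exactly as you do via uniqueness of the coefficients in the expansion over $\calX$. The final step of intersecting the two span equalities and applying the identity on both sides matches the paper's argument.
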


It turns out that in the setting of exact match, simple matches can be explicitly characterized by $v$-minimum match defined below.
\begin{definition}[$v$-Minimum Match]
\label{def:minimum}
Given a neuron $v\in \calX\cup \calY$, we define the $v$-minimum match to be the exact match $(X_v,Y_v)$ in $(\mathcal{X}, \mathcal{Y})$ satisfying the following properties:
\vspace{-10pt}
\begin{enumerate}
\setlength{\itemsep}{0pt}
\setlength{\parsep}{0pt}
\setlength{\parskip}{0pt}\item $v\in X_v\cup Y_v$;
\item any exact match $(X,Y)$ in $(\mathcal{X}, \mathcal{Y})$ with $v\in X\cup Y$ satisfies $X_v\subseteq X$ and $Y_v\subseteq Y$.
\end{enumerate}
\end{definition}
Every neuron $v$ in the maximum match $(X^*,Y^*)$ has a unique $v$-minimum match, which is the intersection of all matches that contain $v$. For a neuron $v$ not in the maximum match, there is no $v$-minimum match because there is no match containing $v$.

The following theorem states that the simple matches are exactly $v$-minimum matches.
\begin{theorem}
\label{lm:simple=minimum}
Assume $(\vect_x)_{x\in \calX}$ and $(\vect_y)_{y\in \calY}$ are both linearly independent. Let $(X^*,Y^*)$ be the maximum (exact) match in $(\calX,\calY)$. $\forall v\in X^*\cup Y^*$, the $v$-minimum match is a simple match, and every simple match is a $v$-minimum match for some neuron $v\in X^*\cup Y^*$.
\end{theorem}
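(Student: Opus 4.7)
The plan is to prove the two directions of the biconditional separately, leaning on the Intersection-Close Lemma to pin down the $v$-minimum match, and on the Union-Close Lemma implicitly via the maximum match. A preliminary step is to verify existence of the $v$-minimum match for every $v\in X^*\cup Y^*$. Since $v\in X^*\cup Y^*$ implies $v$ belongs to at least one exact match (namely $(X^*,Y^*)$), and since $\calX,\calY$ are finite so that only finitely many candidate matches exist, I would iterate Lemma \ref{lm:unionintersection} to conclude that the intersection of all exact matches containing $v$ is itself an exact match. By construction it contains $v$ and is contained in every match that contains $v$, so it is the (unique) $v$-minimum match.

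For the forward direction, that each $v$-minimum match $(X_v,Y_v)$ is simple, I would argue by contradiction. Suppose there were matches $(X_i,Y_i)$ with $(X_i\cup Y_i)\subsetneq (X_v\cup Y_v)$ whose union recovers $(X_v,Y_v)$. Then $v\in X_v\cup Y_v=\bigcup_i(X_i\cup Y_i)$, so some $(X_j,Y_j)$ contains $v$. Minimality of the $v$-minimum match forces $X_v\subseteq X_j$ and $Y_v\subseteq Y_j$, contradicting strict containment.

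For the converse, that every simple match $(\hat X,\hat Y)$ equals some $u$-minimum match, I would consider, for each $u\in \hat X\cup \hat Y$, the $u$-minimum match $(X_u,Y_u)$. Because $(\hat X,\hat Y)$ is itself an exact match containing $u$, minimality gives $X_u\subseteq\hat X$ and $Y_u\subseteq\hat Y$. Since $\calX$ and $\calY$ are disjoint, each $u\in\hat X$ satisfies $u\in X_u$ and each $u\in\hat Y$ satisfies $u\in Y_u$, hence $\hat X=\bigcup_u X_u$ and $\hat Y=\bigcup_u Y_u$. If every $(X_u\cup Y_u)$ were strictly contained in $(\hat X\cup\hat Y)$, this would be exactly the forbidden decomposition in the definition of simple match. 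Therefore some $u^*$ must satisfy $(X_{u^*},Y_{u^*})=(\hat X,\hat Y)$, exhibiting $(\hat X,\hat Y)$ as the $u^*$-minimum match.

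The main obstacle is conceptual rather than computational: one must see that the simplicity condition is precisely the obstruction to the $u$-minimum matches being proper sub-matches, and that the $u$-minimum matches provide a canonical candidate decomposition. The linear independence hypothesis is used only once, to invoke the Intersection-Close Lemma; without it, the $v$-minimum match may fail to exist, which is why the analogous structure theorem for approximate matches in Section \ref{subsec:approximate} requires a different treatment.
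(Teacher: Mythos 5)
Your proposal is correct, but it takes a different route from the paper. The paper proves this theorem as a corollary of the approximate-match characterization (Theorem \ref{thm:simple=minimal}): it uses the Intersection-Close Lemma to show that under linear independence the $v$-minimum match exists and coincides with every $v$-minimal match, so the simple-match characterization already proved for minimal matches transfers verbatim to minimum matches. You instead give a direct, self-contained argument: existence of the $v$-minimum match by iterating the Intersection-Close Lemma over the finitely many matches containing $v$; simplicity of the $v$-minimum match because any proper decomposition would contain a strictly smaller match through $v$, contradicting minimality; and the converse by covering a simple match $(\hat X,\hat Y)$ with the $u$-minimum matches of its neurons and invoking simplicity to force one of them to exhaust $\hat X\cup\hat Y$, hence (using $X_u\subseteq\hat X$, $Y_u\subseteq\hat Y$ and disjointness of $\calX,\calY$) to equal $(\hat X,\hat Y)$. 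In substance your converse argument replicates the paper's proof of Theorem \ref{thm:simple=minimal} specialized to exact matches, so the underlying ideas are the same; what your version buys is independence from the approximate-match machinery (one can read it without Definition 9 or Theorem \ref{thm:simple=minimal}), while the paper's reduction is shorter and makes explicit the useful fact that minimum and minimal matches coincide in the linearly independent exact setting. You also correctly isolate where linear independence enters (only through the Intersection-Close Lemma), matching the paper's discussion of why the approximate case needs the separate notion of $v$-minimal match.
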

Theorem \ref{lm:simple=minimum} implies that the number of simple exact matches is at most linear with respect to the number of neurons given the activation vectors being linearly independent, because the $v$-minimum match for each neuron $v$ is unique. We will give a polynomial time algorithm in Section \ref{sec:alg} to find out all the $v$-minimum matches.

\subsection{Structure of Approximate Matches}
\label{subsec:approximate}
The structure of $\epsilon$-approximate match is more complicated than exact match. A major difference is that in the setting of approximate matches, the intersection of two matches is not necessarily a match. As a consequence, there is no $v$-minimum match in general. Instead, we have \emph{$v$-minimal match}.
\begin{definition}[$v$-Minimal Match]
$v$-minimal matches are matches $(X_v,Y_v)$ in $(\mathcal{X}, \mathcal{Y})$ with the following properties:
\vspace{-10pt}
\begin{enumerate}
\setlength{\itemsep}{0pt}
\setlength{\parsep}{0pt}
\setlength{\parskip}{0pt}
\item $v\in X_v \cup Y_v$;
\item if a match $(X,Y)$ with $X\subseteq X_v$ and $Y\subseteq Y_v$ satisfies $v\in X\cup Y$, then $(X,Y)=(X_v,Y_v)$.
\end{enumerate}
\end{definition}

Different from the setting of exact match where $v$-minimum match is unique for a neuron $v$, there may be multiple $v$-minimal matches for $v$ in the setting of approximate match, and in this setting simple matches can be characterized by $v$-minimal matches instead. Again, for any neuron $v$ not in the maximum match $(X^*,Y^*)$, there is no $v$-minimal match because no match contains $v$.

\begin{theorem}
\label{thm:simple=minimal}
Let $(X^*,Y^*)$ be the maximum match in $(\calX,\calY)$. $\forall v \in X^* \cup Y^*$, every $v$-minimal match is a simple match, and every simple match is a $v$-minimal match for some $v\in X^*\cup Y^*$.
\end{theorem}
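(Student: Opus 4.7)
The plan is to prove both implications by contradiction, paralleling the proof template suggested by Theorem \ref{lm:simple=minimum} but without relying on intersection-closure (which fails in the approximate setting).

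For the direction that every $v$-minimal match is simple: let $(X_v,Y_v)$ be a $v$-minimal match. It is non-empty since $v\in X_v\cup Y_v$. Assume toward contradiction that it is not simple. Then by the definition of simple match there exist matches $(X_i,Y_i)$ with $X_i\cup Y_i\subsetneq X_v\cup Y_v$ for every $i$, and with $\bigcup_i X_i=X_v$, $\bigcup_i Y_i=Y_v$. Since $v\in X_v\cup Y_v=\bigcup_i(X_i\cup Y_i)$, pick an index $j$ with $v\in X_j\cup Y_j$. Then $(X_j,Y_j)$ is a match satisfying $X_j\subseteq X_v$, $Y_j\subseteq Y_v$, $v\in X_j\cup Y_j$, and $(X_j,Y_j)\neq(X_v,Y_v)$, directly contradicting property~2 in the definition of $v$-minimality.

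For the reverse direction, let $(\hat X,\hat Y)$ be a simple match. By the Union-Close Lemma and the definition of $(X^*,Y^*)$, we have $\hat X\subseteq X^*$ and $\hat Y\subseteq Y^*$, so any $v\in \hat X\cup\hat Y$ also lies in $X^*\cup Y^*$. Suppose for contradiction that $(\hat X,\hat Y)$ is not $v$-minimal for any $v\in\hat X\cup\hat Y$. Then for each such $v$ we can select a match $(X_v',Y_v')$ with $X_v'\subseteq\hat X$, $Y_v'\subseteq\hat Y$, $v\in X_v'\cup Y_v'$, and $(X_v',Y_v')\neq(\hat X,\hat Y)$; equivalently, $X_v'\cup Y_v'\subsetneq \hat X\cup\hat Y$. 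The plan is to show that the family $\{(X_v',Y_v')\}_{v\in\hat X\cup\hat Y}$ is a decomposition witnessing the non-simplicity of $(\hat X,\hat Y)$, yielding the desired contradiction.

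The step I expect to be the main obstacle is verifying the two coverage conditions $\bigcup_v X_v'=\hat X$ and $\bigcup_v Y_v'=\hat Y$ \emph{separately}, rather than only as a joint union; this is what rules out pathological witness selections. The key observation is that $\mathcal{X}$ and $\mathcal{Y}$ are neuron sets of two different networks and hence disjoint, so a neuron $v\in\hat X\subseteq\mathcal{X}$ can belong only to the $\mathcal{X}$-side of any submatch, forcing $v\in X_v'$; this gives $\hat X\subseteq\bigcup_v X_v'$, and the reverse inclusion is immediate from $X_v'\subseteq\hat X$. The symmetric argument handles $\hat Y$. Combined with the strict inclusion $X_v'\cup Y_v'\subsetneq\hat X\cup\hat Y$, this exhibits $(\hat X,\hat Y)$ as a union of strict sub-matches, contradicting simplicity and completing the proof.
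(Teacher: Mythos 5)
Your proof is correct and follows essentially the same route as the paper: the first direction is the identical contradiction argument, and the second direction uses the same key idea of covering the simple match $(\hat X,\hat Y)$ by strictly smaller sub-matches containing each neuron $v\in\hat X\cup\hat Y$ and contradicting simplicity. The only cosmetic difference is that you take arbitrary witnesses from the negation of $v$-minimality (correctly noting the disjointness of $\calX$ and $\calY$ for the coverage step), whereas the paper takes a smallest match containing each $v$ and concludes that one of them must equal $(\hat X,\hat Y)$.
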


\begin{remark}
\label{rmk:1}
We use the notion $v$-minimal match for $v \in \calX \cup \calY$. That is, the neuron can be in either networks. We emphasize that this is necessary. Restricting $v \in \calX$ (or $v \in \calY$) does not yield Theorem \ref{thm:simple=minimal} anymore. In other word, $v$-minimal matches for $v \in \calX$ do not represent all simple matches. See Remark A.1 in the Supplementary Material for details.
\end{remark}

\begin{remark}
One may have the impression that the structure of match is very simple. This is not exactly the case. Here we point out the complicated aspect:
\begin{enumerate}
\setlength{\itemsep}{0pt}
\setlength{\parsep}{0pt}
\setlength{\parskip}{0pt}
\item Matches are not closed under the difference operation, even for exact matches. More generally, let $(X_1,Y_1)$ and $(X_2,Y_2)$ be two matches with $X_1 \subsetneq X_2, Y_1 \subsetneq Y_2$. $(X_2 \backslash X_1, Y_2 \backslash Y_1)$ is not necessarily a match.
\item The decomposition of a match into the union of simple matches is not necessarily unique. See Section C in the Supplementary Material for details.
\end{enumerate}
\end{remark}

\section{Algorithms}
\label{sec:alg}
In this section, we will give an efficient algorithm that finds the maximum match. Based on this algorithm, we further give an algorithm that finds all the simple matches, which are precisely the $v$-minimum/minimal matches as shown in the previous section.
The algorithm for finding the maximum match is given in Algorithm \ref{alg:max_mat}. Initially, we guess the maximum match $(X^*, Y^*)$ to be $X^*=\mathcal{X}, Y^*=\mathcal{Y}$. If there is $x \in X^*$ such that $\dist(\vect_x, \Span(\vect_{Y^*})) > \epsilon$, then we remove $x$ from $X^*$. Similarly, if for some $y \in Y^*$ such that $y$ cannot be linearly expressed by $\vect_{X^*}$ within error $\epsilon$, then we remove $y$ from $Y^*$. $X^*$ and $Y^*$ are repeatedly updated in this way until no such $x,y$ can be found.

\begin{algorithm}[H]
\caption{$\mathrm{max\_match}((\vect_{v'})_{v'\in \calX\cup \calY},\epsilon)$}
\label{alg:max_mat}
\begin{algorithmic}[1]
\State $(X^*,Y^*)\leftarrow (\calX,\calY)$
\State $\mathrm{changed}\leftarrow\TRUE$
\While{$\mathrm{changed}$}
\State $\mathrm{changed}\leftarrow\FALSE$
\For{$x\in X^*$}
\If {$\dist(\vect_x,\Span(\vect_{Y^*}))>\epsilon$}
\State $X^*\leftarrow X^*\backslash\{x\}$
\State $\mathrm{changed}\leftarrow\TRUE$
\EndIf
\EndFor
\If{$\mathrm{changed}$}
\State $\mathrm{changed}\leftarrow\FALSE$
\For{$y\in Y^*$}
\If{$\dist(\vect_y,\Span(\vect_{X^*}))>\epsilon$}
\State $Y^*\leftarrow Y^*\backslash\{y\}$
\State $\mathrm{changed}\leftarrow\TRUE$
\EndIf
\EndFor
\EndIf
\EndWhile
\State\Return $(X^*,Y^*)$
\end{algorithmic}
\end{algorithm}

\begin{theorem}
\label{lm:alg_max_mat}
Algorithm \ref{alg:max_mat} outputs the maximum match and runs in polynomial time.
\end{theorem}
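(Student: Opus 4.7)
The plan is to decompose the statement into three claims: (i) an invariant that throughout the execution, every $\epsilon$-approximate match $(X,Y)$ in $(\calX,\calY)$ satisfies $X \subseteq X^*$ and $Y \subseteq Y^*$; (ii) upon termination, $(X^*, Y^*)$ is itself a match; and (iii) the algorithm halts after polynomially many operations. Claims (i) and (ii) together force $(X^*, Y^*)$ to contain every match while being a match itself, which by Lemma~\ref{claim:union} is precisely the unique maximum match; claim (iii) then establishes the complexity statement.

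The mathematical heart is the invariant (i), which I would prove by induction on the sequence of neuron removals. The base case $(X^*, Y^*) = (\calX, \calY)$ is trivial. For the inductive step, suppose the algorithm is about to delete some $x$ from $X^*$ because $\dist(\vect_x, \Span(\vect_{Y^*})) > \epsilon|\vect_x|$ (reading the algorithm's threshold consistently with the definition of match), and suppose for contradiction that $x \in X$ for some match $(X,Y)$. By the inductive hypothesis $Y \subseteq Y^*$, hence $\Span(\vect_Y) \subseteq \Span(\vect_{Y^*})$, and monotonicity of distance under subspace enlargement gives
\[
\dist(\vect_x, \Span(\vect_{Y^*})) \leq \dist(\vect_x, \Span(\vect_Y)) \leq \epsilon|\vect_x|,
\]
contradicting the removal criterion. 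Deletions from $Y^*$ are handled symmetrically, using the symmetric half of the inductive hypothesis. So every neuron removed by the algorithm is a neuron that cannot belong to any match, and the invariant persists.

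Claim (ii) then follows because the outer while loop only exits once the $\mathrm{changed}$ flag remains $\textbf{false}$ after a complete pass, which forces every remaining $x \in X^*$ and $y \in Y^*$ to satisfy the respective defining inequality, matching the definition of an $\epsilon$-approximate match. Claim (iii) follows from the potential argument that each outer iteration either removes at least one neuron from the polynomially bounded set $X^* \cup Y^*$ or exits; each inner sweep performs at most $|\calX|+|\calY|$ distance-to-subspace computations, each solvable in polynomial time by, e.g., QR decomposition or least squares.

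The main obstacle is pinning down claim (ii) given the slightly asymmetric control flow of Algorithm~\ref{alg:max_mat}: one must check that the loop cannot terminate in a state where only the $x$-side inequalities have been verified. The clean way to see this is to argue that whichever inner sweep last set $\mathrm{changed}$ to $\textbf{true}$ forces another outer iteration in which \emph{both} sweeps will be executed against the updated $(X^*, Y^*)$; equivalently, one can slightly reorganize the pseudocode so both sweeps are always run in every iteration without affecting (i) or the complexity bound. Once this control-flow subtlety is dispatched, the rest of the proof is a routine combination of the monotonicity lemma for subspace distances and a potential-function termination argument.
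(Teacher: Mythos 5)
Your proof follows essentially the same route as the paper's: the invariant that no neuron belonging to any match is ever deleted (so $(X^*,Y^*)$ always contains the maximum match), the observation that at termination $(X^*,Y^*)$ is itself a match, and the counting argument that at most $|\calX|+|\calY|$ removals can occur, each sweep costing polynomially many distance computations; your explicit use of monotonicity of $\dist(\vect_x,\cdot)$ under enlarging the spanned subspace just spells out what the paper leaves implicit. The one place you go beyond the paper is the control-flow check for your claim (ii), and your instinct there is right: the paper's proof simply asserts that both families of inequalities hold at exit. Note, however, that your first suggested repair (``whichever inner sweep last set $\mathrm{changed}$ to true forces another outer iteration in which both sweeps run'') does not cover the case in which the very first $x$-sweep removes nothing: then no sweep ever set $\mathrm{changed}$, the ``if changed'' guard skips the $y$-sweep entirely, and the algorithm returns $(\calX,\calY)$ with the $y$-side never examined --- for instance $\vect_{x_1}=(1,0)$, $\vect_{y_1}=(1,0)$, $\vect_{y_2}=(0,1)$ makes the literal pseudocode output a non-match. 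So for the algorithm exactly as printed this is a genuine gap (one shared by the paper's own proof, which silently assumes both sides have been verified), and your second remedy --- running both sweeps unconditionally in every outer iteration --- is the correct way to dispatch it; it changes neither the invariant nor the polynomial running-time bound.
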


Our next algorithm (Algorithm \ref{alg:min_mat}) is to output, for a given neuron $v \in \mathcal{X} \cup \mathcal{Y}$, the $v$-minimum match (for exact match given the activation vectors being linearly independent) or one $v$-minimal match (for approximate match). The algorithm starts from $(X_v,Y_v)$ being the maximum match and iteratively finds a smaller match $(X_v,Y_v)$ keeping $v\in X_v\cup Y_v$ until further reducing the size of $(X_v,Y_v)$ would have to violate $v\in X_v\cup Y_v$. 

\begin{algorithm}[H]
\caption{$\mathrm{min\_match}((\vect_{v'})_{v'\in \calX\cup \calY}, v,\epsilon)$}
\label{alg:min_mat}
\begin{algorithmic}[1]
\State 
$(X_v,Y_v)\leftarrow \mathrm{max\_match}((\vect_{v'})_{v'\in \calX\cup \calY},\epsilon)$
\If{$v\notin X_v\cup Y_v$}
\State\Return ``failure''
\EndIf
\While {there exists $u\in X_v\cup Y_v$ unchecked}
\State Pick an unchecked $u\in X_v\cup Y_v$ and mark it as checked
\If{$u\in X_v$}
\State $(X,Y)\leftarrow (X_v\backslash\{u\},Y_v)$
\Else
\State $(X,Y)\leftarrow (X_v,Y_v\backslash\{u\})$
\EndIf
\State 
$(X^*,Y^*)\leftarrow\mathrm{max\_match}((\vect_{v'})_{v'\in X\cup Y},\epsilon)$
\If{$v\in (X^*,Y^*)$}
\State $(X_v,Y_v)\leftarrow (X^*,Y^*)$
\EndIf
\EndWhile
\State\Return $(X_v,Y_v)$
\end{algorithmic}
\end{algorithm}

\begin{theorem}
\label{lm:alg_one_min_mat}
Algorithm \ref{alg:min_mat} outputs one $v$-minimal match for the given neuron $v$. If $\epsilon=0$ (exact match), the algorithm outputs the unique $v$-minimum match provided $(\vect_x)_{x \in \mathcal{X}}$ and $(\vect_y)_{y \in \mathcal{Y}}$ are both linearly independent. Moreover, the algorithm always runs in polynomial time.
\end{theorem}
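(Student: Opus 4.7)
The plan is to establish three properties of Algorithm \ref{alg:min_mat}: (i) whenever it does not return ``failure,'' the output $(X_v, Y_v)$ is a match containing $v$; (ii) this match is $v$-minimal; and (iii) the running time is polynomial. The exact-match strengthening will then follow from Theorem \ref{lm:simple=minimum} and uniqueness of the $v$-minimum match.

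For (i), I would track the invariant that $(X_v, Y_v)$ is always a match and, once past the initial failure check, contains $v$. Initially $(X_v, Y_v)$ is the output of $\mathrm{max\_match}$, hence a match by Theorem \ref{lm:alg_max_mat}; if $v$ is absent, no match contains $v$ and ``failure'' is correct by Definition of $v$-minimal match. Otherwise, each reassignment $(X_v, Y_v) \leftarrow (X^*, Y^*)$ inside the loop occurs only when $v \in X^* \cup Y^*$, and $(X^*, Y^*)$ is a match again by Theorem \ref{lm:alg_max_mat}, preserving the invariant.

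The crux of the argument, and the main obstacle, is (ii). I would prove by induction on the iterations the following invariant: for every $u$ that has been marked checked, either $u \notin X_v \cup Y_v$ at the current moment, or no match $(X,Y)$ with $X \cup Y \subseteq (X_v \cup Y_v) \setminus \{u\}$ satisfies $v \in X \cup Y$. The subtlety is that $(X_v, Y_v)$ shrinks over time while ``checked'' is a permanent flag established against a possibly larger witness set. At the moment $u$ is checked, we compute $(X^*, Y^*) = \mathrm{max\_match}$ on $(X_v \cup Y_v) \setminus \{u\}$; either $v \in X^* \cup Y^*$, in which case $(X_v, Y_v)$ is immediately replaced by $(X^*, Y^*)$ and $u$ disappears from $X_v \cup Y_v$, or $v \notin X^* \cup Y^*$, in which case the maximality of $(X^*, Y^*)$ implies that \emph{every} match with support in $(X_v \cup Y_v) \setminus \{u\}$ is contained in $(X^*, Y^*)$ and therefore misses $v$. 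Preservation under subsequent shrinkings of $(X_v, Y_v)$ uses the obvious monotonicity that being a match depends only on the chosen $X, Y$ and not on the ambient set, so any match inside a smaller witness set is also a match inside the larger one; hence if the larger one had no match containing $v$, neither does the smaller. At termination, every $u \in X_v \cup Y_v$ has been checked, so removing any single $u$ kills every submatch containing $v$, which is precisely the definition of a $v$-minimal match.

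For (iii), each iteration of the while loop marks one previously unchecked neuron as checked and no neuron is ever unchecked, so at most $|\calX| + |\calY|$ iterations occur; each makes a single call to $\mathrm{max\_match}$, which is polynomial by Theorem \ref{lm:alg_max_mat}. Finally, for $\epsilon = 0$ with $(\vect_x)_{x\in\calX}$ and $(\vect_y)_{y\in\calY}$ linearly independent, Lemma \ref{lm:unionintersection} shows that matches containing $v$ are closed under intersection, so their intersection is itself a match and equals the unique $v$-minimum match of Definition \ref{def:minimum}; this $v$-minimum match is then the unique $v$-minimal match, and Algorithm \ref{alg:min_mat} outputs it.
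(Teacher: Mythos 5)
Your proposal is correct and follows essentially the same route as the paper's proof: both hinge on the fact that $(X_v,Y_v)$ only shrinks, and that when a surviving $u$ was checked, the maximum match on the $u$-removed set did not contain $v$, so by maximality no submatch avoiding $u$ can contain $v$. The only difference is presentational — you maintain this as a forward invariant over iterations, while the paper argues by contradiction, revisiting the iteration at which $u$ was checked and invoking the Union-Close Lemma; the exact-match addendum via the Intersection-Close Lemma matches the paper's reasoning as well.
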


Finally, we show an algorithm (Algorithm \ref{alg:all_min_mat}) that finds all the $v$-minimal matches in time $L^{O(N_v)}$. Here, $L$ is the size of the input ($L = (|\mathcal{X}|+ |\mathcal{Y}|) \cdot d$) and $N_v$ is the number of $v$-minimal matches for neuron $v$. Note that in the setting of $\epsilon=0$ (exact match) with $(\vect_x)_{x\in\calX}$ and $(\vect_y)_{y\in\calY}$ being both linearly independent, we have $N_v \leq 1$, so Algorithm \ref{alg:all_min_mat} runs in polynomial time in this case.

Algorithm \ref{alg:all_min_mat} finds all the $v$-minimal matches one by one by calling Algorithm \ref{alg:min_mat} in each iteration. To make sure that we never find the same $v$-minimal match twice, we always delete a neuron in every previously-found $v$-minimal match before we start to find the next one.

\begin{algorithm}[H]
\caption{$\mathrm{all\_min\_match}((\vect_{v'})_{v'\in \calX\cup \calY},v,\epsilon)$}
\label{alg:all_min_mat}
\begin{algorithmic}[1]
\State $\mathcal S\leftarrow \emptyset$
\State $\mathrm{found}\leftarrow\TRUE$
\While{$\mathrm{found}$}
\State $\mathrm{found}\leftarrow\FALSE$
\State Let $\mathcal S=\{(X_1,Y_1),(X_2,Y_2),\cdots,(X_{|\calS|},Y_{|\calS|})\}$
\While{$\neg\mathrm{found}$ \AND\ there exists $(u_1,u_2,\cdots,u_{|\calS|})\in (X_1\cup Y_1)\times (X_2\cup Y_2)\times\cdots \times (X_{|\calS|}\cup Y_{|\calS|})$ unchecked}
\State Pick the next unchecked $(u_1,u_2,\cdots,u_{|\calS|})\in (X_1\cup Y_1)\times (X_2\cup Y_2)\times\cdots \times (X_{|\calS|}\cup Y_{|\calS|})$ and mark it as checked
\State $(X,Y)\leftarrow (\calX,\calY)$
\For{$i=1,2,\cdots,|\calS|$}
\If{$u_i\in \calX$}
\State $X\leftarrow X\backslash\{u_i\}$
\Else 
\State $Y\leftarrow Y\backslash\{u_i\}$
\EndIf
\EndFor
\If{$\mathrm{min\_match}((\vect_{v'})_{v'\in X\cup Y},v,\epsilon)$ doesn't return ``failure''}
\State $(X_v,Y_v)\leftarrow \mathrm{min\_match}((\vect_{v'})_{v'\in X\cup Y},v,\epsilon)$
\State $\mathcal S\leftarrow\mathcal S\cup\{(X_v,Y_v)\}$
\State $\mathrm{found}\leftarrow\TRUE$
\EndIf
\EndWhile
\EndWhile
\State\Return $\mathcal S$
\end{algorithmic}
\end{algorithm}

\begin{theorem}
\label{lm:alg_all_min_mat}
Algorithm \ref{alg:all_min_mat} outputs all the $N_v$ different $v$-minimal matches in time $L^{O(N_v)}$. With Algorithm 3, we can find all the simple matches by exploring all $v\in\calX\cup \calY$ based on Theorem \ref{thm:simple=minimal}.
\end{theorem}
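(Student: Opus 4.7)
The plan is to establish four facts: (i) every match added to $\calS$ is a genuine $v$-minimal match of $(\calX,\calY)$; (ii) the matches added to $\calS$ are pairwise distinct; (iii) at termination $|\calS|=N_v$; and (iv) the runtime is $L^{O(N_v)}$. The final sentence of the theorem is then immediate by running Algorithm \ref{alg:all_min_mat} for each $v\in\calX\cup\calY$ and invoking Theorem \ref{thm:simple=minimal}, so I focus on a single invocation.

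For (i), every entry added to $\calS$ comes from a call $\mathrm{min\_match}((\vect_{v'})_{v'\in X\cup Y},v,\epsilon)$ on some sub-universe $(X,Y)\subseteq(\calX,\calY)$. Theorem \ref{lm:alg_one_min_mat} guarantees that the returned $(X_v,Y_v)$ is $v$-minimal within $(X,Y)$, and the key observation is that $v$-minimality is a \emph{local} property: any candidate submatch $(X',Y')\subseteq(X_v,Y_v)$ with $v\in X'\cup Y'$ is automatically contained in $(X,Y)\subseteq(\calX,\calY)$, so minimality within $(X,Y)$ lifts to minimality within $(\calX,\calY)$. Fact (ii) is immediate from the structure of the algorithm: once $(X_j,Y_j)$ enters $\calS$, every subsequent tuple explicitly deletes some $u_j\in X_j\cup Y_j$ from the universe before calling $\mathrm{min\_match}$, so no later output can equal $(X_j,Y_j)$.

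The heart of the argument is (iii). Suppose for contradiction that the algorithm terminates with $\calS=\{(X_1,Y_1),\ldots,(X_k,Y_k)\}$ while some $v$-minimal match $(X^*,Y^*)$ is missing. I first note that any two distinct $v$-minimal matches are incomparable: if $M\subsetneq M'$ are both $v$-minimal and both contain $v$, the definition of minimality forces $M=M'$. Consequently, for each $j\leq k$ the set $(X_j\cup Y_j)\setminus(X^*\cup Y^*)$ is non-empty, so we may pick $u_j$ from it. The tuple $(u_1,\ldots,u_k)$ lies in the Cartesian product enumerated by the inner loop and must be visited before termination. After deleting $u_1,\ldots,u_k$ the sub-universe still contains $(X^*,Y^*)$ intact, so $v$ lies in its maximum match and $\mathrm{min\_match}$ does not return ``failure''; by (i) it returns a genuine $v$-minimal match $M'$ of $(\calX,\calY)$ that avoids every $u_j$, so $M'\neq(X_j,Y_j)$ for all $j$. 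This fresh match would be added to $\calS$, contradicting termination.

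For (iv), the outer loop runs exactly $N_v+1$ times (each success enlarges $\calS$; one final unsuccessful pass exits). An iteration with $|\calS|=s\leq N_v$ enumerates at most $\prod_{j=1}^{s}|X_j\cup Y_j|\leq L^{N_v}$ tuples, each triggering one polynomial-time call to $\mathrm{min\_match}$ by Theorem \ref{lm:alg_one_min_mat}, which yields the claimed $L^{O(N_v)}$ bound. The main obstacle lies in step (iii): one must verify that $\mathrm{min\_match}$ on a restricted sub-universe cannot return a \emph{spuriously} minimal match---one that is minimal only because a strictly smaller match of the original problem uses an element we removed---and the locality observation in (i) is exactly what prevents this pathology.
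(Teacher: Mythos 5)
Your proposal is correct and follows essentially the same route as the paper's proof: distinctness via the deleted coordinates $u_j$, the bound $|\calS|\leq N_v$ giving the $L^{O(N_v)}$ runtime, and completeness by contradiction using a tuple $(u_1,\ldots,u_k)$ with $u_j\in(X_j\cup Y_j)\setminus(X^*\cup Y^*)$. Your explicit treatment of the ``locality'' of $v$-minimality under restriction of the universe and of the incomparability of distinct $v$-minimal matches merely spells out steps the paper asserts without detail.
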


In the worst case, Algorithm \ref{alg:all_min_mat} is not polynomial time, as $N_v$ is not upper bounded by a constant in general. However, under assumptions we call strong linear independence and stability, we show that Algorithm \ref{alg:all_min_mat} runs in polynomial time. 
Specifically, we say $(\vect_x)_{x\in \calX}$ satisfies $\theta$-strong linear independence for $\theta\in (0,\frac\pi 2]$ if $\mathbf 0\notin \vect_{\calX}$ and for any two non-empty disjoint subsets $X_1,X_2\subseteq \calX$, the angle between $\Span(\vect_{X_1})$ and $\Span(\vect_{X_2})$ is at least $\theta$. Here, the angle between two subspaces is defined to be the minimum angle between non-zero vectors in the two subspaces. We define $\theta$-strong linear independence for $(\vect_y)_{y\in \calY}$ similarly. We say $(\vect_x)_{x\in \calX}$ and $(\vect_y)_{y\in \calY}$ satisfy $(\epsilon,\lambda)$-stability for $\epsilon\geq 0$ and $\lambda> 1$ if $\forall x\in \calX,\forall Y\subseteq \calY,\dist(\vect_x,\Span(\vect_{Y}))\notin (\epsilon|\vect_x|,\lambda \epsilon|\vect_x|]$ and $\forall y\in \calY,\forall X\subseteq \calX,\dist(\vect_y,\Span(\vect_{X}))\notin (\epsilon|\vect_y|,\lambda \epsilon|\vect_y|]$. We prove the following theorem.
\begin{theorem}
\label{thm:ind_sta}
Suppose $\exists\theta\in(0,\frac\pi 2]$ such that $(\vect_x)_{x\in \calX}$ and $(\vect_y)_{y\in \calY}$ both satisfy $\theta$-strong linear independence and $(\epsilon,\frac{2}{\sin\theta}+1)$-stability. Then, $\forall v\in \calX\cup\calY,N_v\leq 1$.
As a consequence, Algorithm \ref{alg:all_min_mat} finds all the $v$-minimal matches in polynomial time, and we can find all the simple matches in polynomial time by exploring all $v\in\calX\cup \calY$ based on Theorem \ref{thm:simple=minimal}.
\end{theorem}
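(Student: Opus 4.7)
The plan is to prove $N_v \leq 1$ by contradiction: suppose there are two distinct $v$-minimal matches $(X_1,Y_1)$ and $(X_2,Y_2)$. Since $\calX$ and $\calY$ are disjoint, by symmetry we may assume $v\in\calX$, so $v\in X_1\cap X_2$. I will establish an \emph{intersection-close} lemma: under the stated hypotheses, $(X_1\cap X_2,\,Y_1\cap Y_2)$ is itself an $\epsilon$-approximate match. Granted this, the intersection is a match contained in both $(X_1,Y_1)$ and $(X_2,Y_2)$ while still containing $v$, so the $v$-minimality of each $(X_i,Y_i)$ forces it to coincide with both, contradicting their distinctness. The two polynomial-time consequences are then immediate: once $N_v\le 1$, Theorem \ref{lm:alg_all_min_mat} gives $L^{O(N_v)}=\poly(L)$ for Algorithm \ref{alg:all_min_mat}, and Theorem \ref{thm:simple=minimal} lets us enumerate all simple matches by iterating over $v\in\calX\cup\calY$.

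For the intersection-close lemma, fix $x\in X_1\cap X_2$ (the case $y\in Y_1\cap Y_2$ is symmetric, using $\theta$-strong linear independence of $(\vect_x)_{x\in\calX}$). If $Y_1\subseteq Y_2$ or $Y_2\subseteq Y_1$, the required bound is inherited directly from $(X_1,Y_1)$ or $(X_2,Y_2)$ being a match. Otherwise $Y_1\setminus Y_2$ and $Y_2\setminus Y_1$ are both non-empty. Pick $u_i\in\Span(\vect_{Y_i})$ with $\|u_i-\vect_x\|\le\epsilon|\vect_x|$, so that $\|u_1-u_2\|\le 2\epsilon|\vect_x|$. Because $\theta$-strong linear independence implies the angle between $\Span(\vect_{Y_1\cap Y_2})$ and $\Span(\vect_{Y_i\setminus Y_{3-i}})$ is at least $\theta>0$, their intersection is trivial, and since $Y_i=(Y_1\cap Y_2)\cup(Y_i\setminus Y_{3-i})$, we can uniquely decompose $u_i=a_i+b_i$ with $a_i\in\Span(\vect_{Y_1\cap Y_2})$ and $b_i\in\Span(\vect_{Y_i\setminus Y_{3-i}})$.

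Writing $u_1-u_2=b_1+\bigl((a_1-a_2)-b_2\bigr)$, the parenthesized summand lies in $\Span(\vect_{Y_2})$ while $b_1\in\Span(\vect_{Y_1\setminus Y_2})$, and $\theta$-strong linear independence applied to the disjoint non-empty sets $Y_1\setminus Y_2$ and $Y_2$ forces these two subspaces to have angle at least $\theta$. The standard fact that a vector decomposed across two subspaces with angle $\ge\theta$ has each component of norm at most $1/\sin\theta$ times that of the vector (proved by projecting onto the orthogonal complement) yields $\|b_1\|\le\|u_1-u_2\|/\sin\theta\le 2\epsilon|\vect_x|/\sin\theta$, and symmetrically $\|b_2\|\le 2\epsilon|\vect_x|/\sin\theta$. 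Hence $a_2\in\Span(\vect_{Y_1\cap Y_2})$ satisfies
\[
\|\vect_x-a_2\|\le\|\vect_x-u_2\|+\|b_2\|\le\epsilon|\vect_x|+\tfrac{2}{\sin\theta}\epsilon|\vect_x|=\lambda\epsilon|\vect_x|,
\]
so $\dist(\vect_x,\Span(\vect_{Y_1\cap Y_2}))\le\lambda\epsilon|\vect_x|$. Invoking $(\epsilon,\lambda)$-stability with $\lambda=\tfrac{2}{\sin\theta}+1$, this distance cannot lie in $(\epsilon|\vect_x|,\lambda\epsilon|\vect_x|]$, hence it is at most $\epsilon|\vect_x|$, which is the desired match condition for $x$. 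The residual edge case $Y_1\cap Y_2=\emptyset$ is handled by the same chain with $a_i=0$: one obtains $\|\vect_x\|\le\lambda\epsilon|\vect_x|$, then stability at $Y=\emptyset$ combined with $\epsilon<1$ forces $\vect_x=\mathbf 0$, contradicting $\mathbf 0\notin\vect_\calX$, so this case cannot arise when $v\in X_1\cap X_2$.

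The main obstacle is orchestrating strong linear independence with stability at exactly the right threshold. Strong linear independence is used twice: first, to make the decomposition $u_i=a_i+b_i$ well-defined (trivial intersection of the two subspaces); second, to convert the angle lower bound $\theta$ into the quantitative norm inequality $\|b_i\|\le\|u_1-u_2\|/\sin\theta$. Stability is then calibrated precisely by the choice $\lambda=2/\sin\theta+1$ to close the gap between the $\lambda\epsilon|\vect_x|$ bound one derives geometrically and the $\epsilon|\vect_x|$ bound required of an $\epsilon$-approximate match. Once the intersection-close lemma is established, $N_v\le 1$ and both polynomial-time statements follow by straightforward bookkeeping via Theorems \ref{lm:alg_all_min_mat} and \ref{thm:simple=minimal}.
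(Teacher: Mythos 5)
Your proposal is correct and follows essentially the same route as the paper: the key step is the same intersection-close lemma under $\theta$-strong linear independence and $(\epsilon,\frac{2}{\sin\theta}+1)$-stability, proved by the same triangle-inequality plus angle-bound plus stability argument (the paper manipulates the coefficients $\mu_y,\mu'_y$ directly where you phrase it as a decomposition $u_i=a_i+b_i$, but the computation is identical), and $N_v\le 1$ then follows from $v$-minimality exactly as you argue. Your explicit handling of the edge cases $Y_1\subseteq Y_2$ and $Y_1\cap Y_2=\emptyset$ is a small added care the paper leaves implicit.
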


\section{Experiments}
\label{sec:experiments}
\newcommand{\jiayuan}[1]{({\color{blue} Jiayuan: #1})}
\newcommand{\wuyue}[1]{({\color{green} Wuyue: #1})}

We conduct experiments on architectures of VGG\citep{simonyan2014very} and ResNet \citep{he2016deep} on the dataset CIFAR10\citep{cifar10} and ImageNet\citep{imagenet_cvpr09}. Here we investigate multiple networks initialized with different random seeds, which achieve reasonable accuracies. Unless otherwise noted, we focus on the neurons activated by ReLU.

The \textit{activation vector} $\vect_v$ mentioned in Section \ref{sec:preli} is defined as the activations of one neuron $v$ over the validation set. For a fully connected layer, $\vect_v \in \mathbb{R}^{d}$, where $d$ is the number of images. For a convolutional layer, the activations of one neuron $v$, given the image $I_i$, is a feature map $z_v(a_i) \in \mathbb{R}_{h \times w}$. We vectorize the feature map as $vec(z_v(a_i)) \in \mathbb{R}^{h \times w}$, and thus $\vect_v := (vec(z_v(a_1)), vec(z_v(a_2)), \cdots, vec(z_v(a_d))) \in \mathbb{R}_{h \times w \times d}$.

\subsection{Maximum Match}
\label{sec:max_mat}
We introduce \emph{maximum matching similarity} to measure the overall similarity between sets of neurons. Given two sets of neurons $\calX, \calY$ and $\epsilon$, algorithm \ref{alg:max_mat} outputs the maximum match $X^*,Y^*$. The maximum matching similarity $s$ under $\epsilon$ is defined as
$
\label{eq:mms}
s(\epsilon) = \frac{|X^*| + |Y^*|}{|\calX| + |\calY|}
$

Here we only study neurons in the same layer of two networks with same architecture but initialized with different seeds.
For a convolutional layer, we randomly sample $d$ from $h \times w \times d$ outputs to form an activation vector for several times, and average the maximal matching similarity.

\textbf{Different Architecture and Dataset}
We examine several architectures on different dataset. For each experiment, five differently initialized networks are trained, and the maximal matching similarity is averaged over all the pairs of networks given $\epsilon$. The similarity values show little variance among different pairs, which indicates that this metric reveals a general property of network pairs. The detail of network structures and validation accuracies are listed in the Supplementary Section E.2.

\begin{figure}[htb]
\centering
\begin{subfigure}{.49\textwidth}
  \centering
  \includegraphics[width=\linewidth]{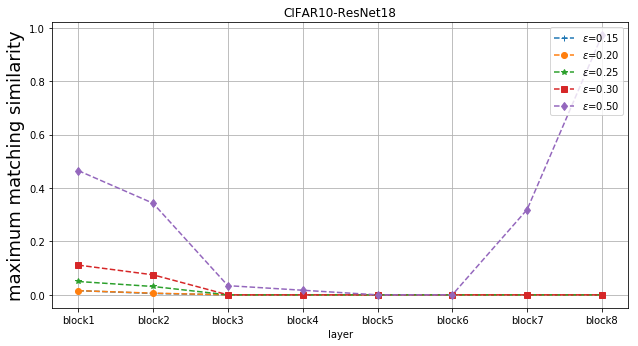}
  \vspace{-1em}
  \caption{CIFAR10-ResNet18}
\end{subfigure}%
\hfill
\begin{subfigure}{.49\textwidth}
  \centering
  \includegraphics[width=\linewidth]{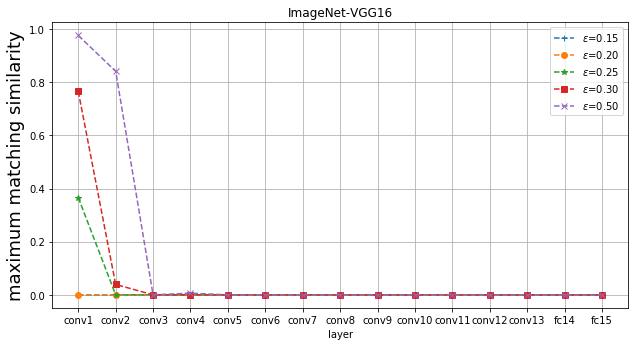}
  \vspace{-1em}
  \caption{ImageNet-VGG16}
\end{subfigure}%
\hfill
\begin{subfigure}{.99\textwidth}
  \centering
  \includegraphics[width=\linewidth]{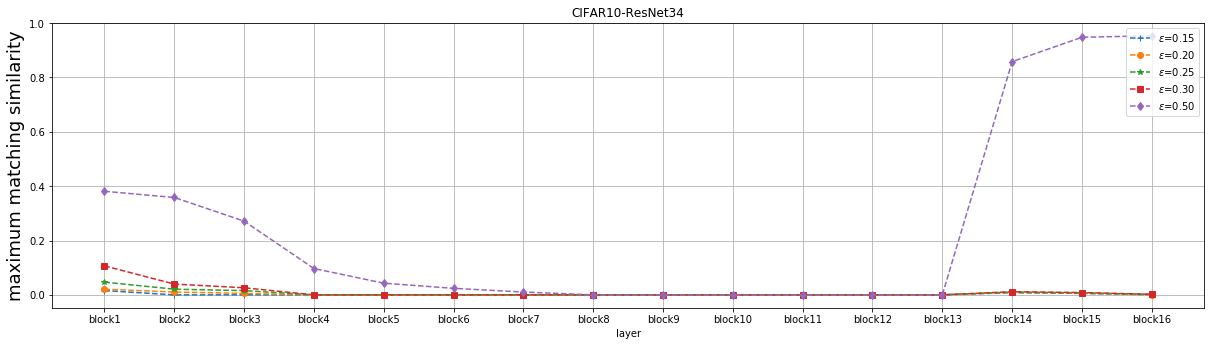}
  \vspace{-1em}
  \caption{CIFAR10-ResNet34}
\end{subfigure}%
\caption{Maximal matching similarities of different architectures on different datasets under various $\epsilon$. The x-axis is along the propagation. (a) shows ResNet18 on CIFAR10 validation set, we leave other classical architectures like VGG in Supplementary material; (b) shows VGG16 on ImageNet validation set; (c) shows a deeper ResNet on CIFAR10.}
\label{fig:max_mat}
\end{figure}

Figure \ref{fig:max_mat} shows maximal matching similarities of all the layers of different architectures under various $\epsilon$. From these results, we make the following conclusions:
\begin{enumerate}
\setlength{\itemsep}{0pt}
\setlength{\parsep}{0pt}
\setlength{\parskip}{0pt}
\item For most of the convolutional layers, the maximum match similarity is very low. For deep neural networks, the similarity is almost zero. This is surprising, as it is widely believed that the convolutional layers are trained to extract specific patterns. However, the observation shows that different CNNs (with the same architecture) may learn different intermediate patterns.
\item Although layers close to the output sometimes exhibit high similarity, it is a simple consequence of their alignment to the output: First, the output vector of two networks must be well aligned because they both achieve high accuracy. Second, it is necessary that the layers before output are similar because if not, after a linear transformation, the output vectors will not be similar. Note that in Fig \ref{fig:max_mat} (b) layers close to the output do not have similarity. This is because in this experiment the accuracy is relatively low. (See also in Supplementary materials that, for a trained and an untrained networks which have very different accuracies and therefore layers close to output do not have much similarity.)
\item There is also relatively high similarity of layers close to the input. Again, this is the consequence of their alignment to the same input data as well as the low-dimension nature of the low level layers. More concretely, the fact that each low-level filter contains only a few parameters results in a low dimension space after the transformation; and it is much easier to have high similarity in low dimensional space than in high dimensional space.
\end{enumerate}



\subsection{Simple Match}
The maximum matching illustrates the overall similarity but does not provide information about the relation of specific neurons. Here we analyze the distribution of the size of simple matches to reveal the finer structure of a layer. Given $\epsilon$ and two sets of neurons $\calX$ and $\calY$, algorithm \ref{alg:all_min_mat} will output all the simple matches.

For more efficient implementation, given $\epsilon$, we run the randomized algorithm \ref{alg:min_mat} over each $v \in \calX \cup \calY$ to get one $v$-minimal match for several iterations. The final result is the collection of all the $v$-minimal matches found (remove duplicated matches) , which we use to estimate the distribution.

Figure \ref{fig:min_mat_struct} shows the distribution of the size of simple matches on layers close to input or output respectively. We make the following observations:
\begin{enumerate}
\setlength{\itemsep}{0pt}
\setlength{\parsep}{0pt}
\setlength{\parskip}{0pt}
\item While the layers close to output are similar overall, it seems that they do not show similarity in a local manner. There are very few simple matches with small sizes. It is also an evidence that such similarity is the result of its alignment to the output, rather than intrinsic similar representations.
\item The layer close to input shows lower similarity in the finer structure. Again, there are few simple matches with small sizes.
\end{enumerate}

In sum, almost no single neuron (or a small set of neurons) learn similar representations, even in layers close to input or output.


\begin{figure}
\centering
\begin{subfigure}{.4\textwidth}
  \centering
  \includegraphics[width=\linewidth]{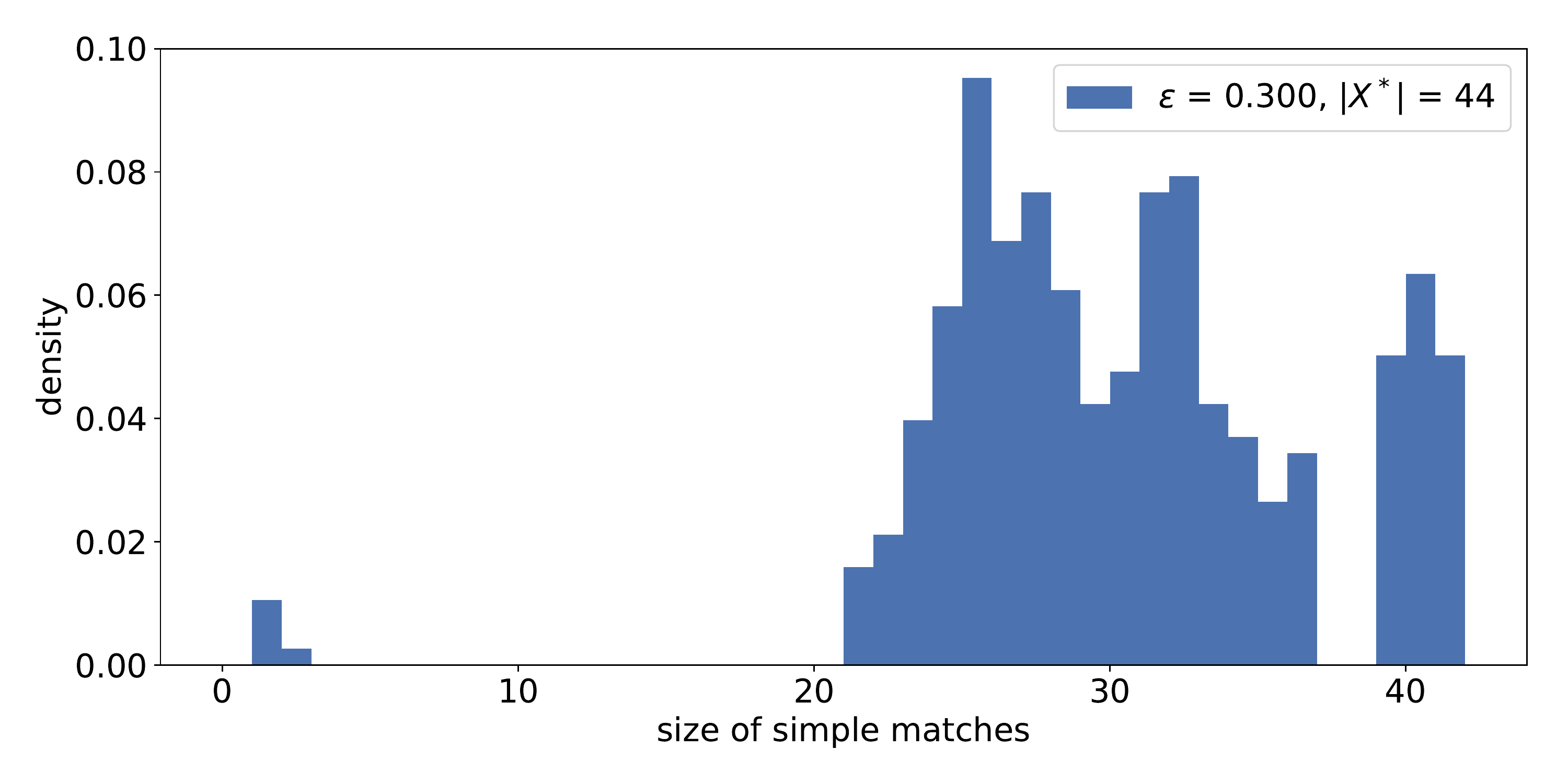}
  \vspace{-1.5em}
  \caption{Layer close to input}
\end{subfigure}%
\hfill
\begin{subfigure}{.4\textwidth}
  \centering
  \includegraphics[width=\linewidth]{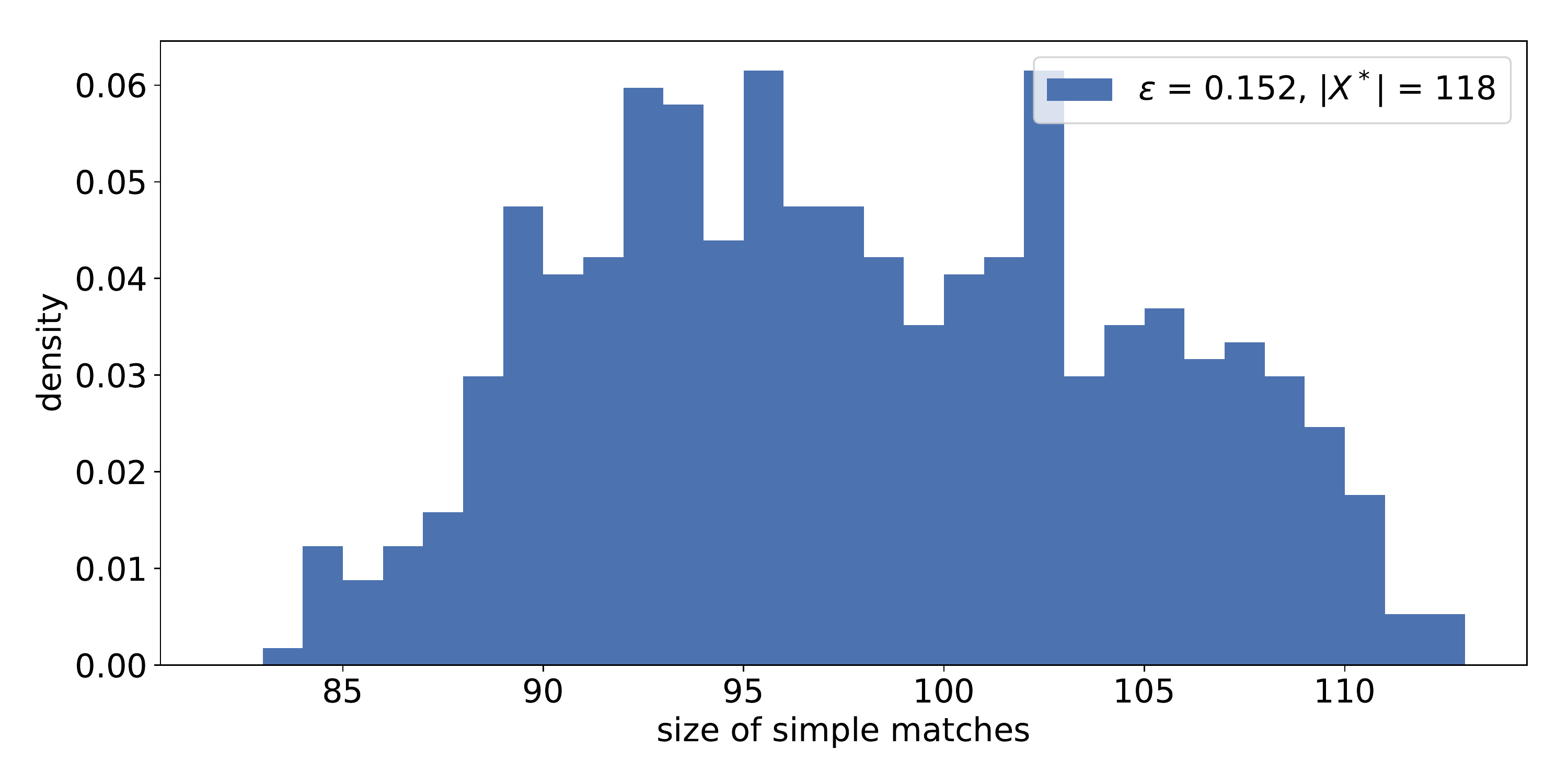}
  \vspace{-1.5em}
  \caption{Layer close to output}
\end{subfigure}

\caption{The distribution of the sizes of minimal matches of layers close to input and output respectively }
\label{fig:min_mat_struct}
\end{figure}

\section{Conclusion}
\label{sec:conclusion}

In this paper, we investigate the similarity between representations learned by two networks with identical architecture but trained from different initializations.  We develop a rigorous theory and propose efficient algorithms.  Finally, we apply the algorithms in experiments and find that representations learned by convolutional layers are not as similar as prevalently expected.

This raises important questions: Does our result imply two networks learn completely different representations, or subspace match is not a good metric for measuring the similarity of representations? If the former is true, we need to rethink not only learning representations, but also interpretability of deep learning. If from each initialization one learns a different representation, how can we interpret the network? If, on the other hand, subspace match is not a good metric, then what is the right metric for similarity of representations? We believe this is a fundamental problem for deep learning and worth systematic and in depth studying.

\section{Acknowledgement}
\label{sec:acknowledge}
This work is supported by National Basic Research Program of China (973 Program) (grant no. 2015CB352502), NSFC (61573026) and BJNSF (L172037) and a grant from Microsoft Research Asia.

\bibliographystyle{plainnat}
\bibliography{references}

\appendix
\algrenewcommand\algorithmicrequire{\textbf{Input:}}
\algrenewcommand\algorithmicensure{\textbf{Output:}}

\newtheorem{innercustomthm}{Theorem}
\newenvironment{customthm}[1]
  {\renewcommand\theinnercustomthm{#1}\innercustomthm}
  {\endinnercustomthm}

\newtheorem{innercustomlm}{Lemma}
\newenvironment{customlm}[1]
  {\renewcommand\theinnercustomlm{#1}\innercustomlm}
  {\endinnercustomlm}

\renewcommand\thesection{\Alph{section}}

\section{Omitted Proofs in Section 3}
\label{sec:sup_theory}
\begin{customlm}2[Union-Close Lemma]
\label{claim:union}
Let $(X_1,Y_1)$ and $(X_2,Y_2)$ be two $\epsilon$-approximate matches in $(\calX,\calY)$. Then $(X_1\cup X_2,Y_1\cup Y_2)$ is still an $\epsilon$-approximate match.
\end{customlm}
\begin{proof}
The lemma follows immediately from the definition of $\epsilon$-approximate match.
\end{proof}


\begin{customthm}5[Decomposition Theorem]
\label{thm:simple}
Every match $(X,Y)$ in $(\calX,\calY)$ can be expressed as a union of simple matches. Formally, there are simple matches $(\hat X_i,\hat Y_i)$ satisfying $X=\bigcup\limits_i\hat X_i$ and $Y=\bigcup\limits_i\hat Y_i$.
\end{customthm}
\begin{proof}
We prove by induction on the size of the match, $|X\cup Y|$. When $|X\cup Y|$ is the smallest among all non-empty matches, we know $(X,Y)$ is itself a simple match, so the theorem holds. For larger $|X\cup Y|$, $(X,Y)$ may not be a simple match, and in this case we know $(X,Y)$ is the union of smaller matches $(X_i,Y_i)$: $X_i\cup Y_i\subsetneq X\cup Y$ and $X=\bigcup\limits_iX_i,Y=\bigcup\limits_iY_i$, and thus by the induction hypothesis that every $(X_i,Y_i)$ is a union of simple matches, we know $(X,Y)$ is a union of simple matches.
\end{proof}


\begin{customlm}6[Intersection-Close Lemma]
\label{lm:unionintersection}
Assume $(\vect_x)_{x\in \calX}$ and $(\vect_y)_{y\in \calY}$ are both linearly independent. Let $(X_1,Y_1)$ and $(X_2,Y_2)$ be exact matches in $(\calX,\calY)$. Then, $(X_1\cap X_2,Y_1\cap Y_2)$ is still an exact match.
\end{customlm}
Lemma \ref{lm:unionintersection} is a direct corollary of the following claim.
\begin{claim}
\label{claim:intersect}
Assume $(\vect_x)_{x\in \calX}$ is linearly independent. Then $\forall X_1,X_2\subseteq \calX,\Span(\vect_{X_1\cap X_2})=\Span(\vect_{X_1})\cap\Span( \vect_{X_2})$.
\end{claim}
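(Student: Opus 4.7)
The plan is to prove the two inclusions $\subseteq$ and $\supseteq$ separately. The inclusion $\Span(\vect_{X_1 \cap X_2}) \subseteq \Span(\vect_{X_1}) \cap \Span(\vect_{X_2})$ is immediate from monotonicity of $\Span$: since $X_1 \cap X_2$ is a subset of both $X_1$ and $X_2$, any linear combination of $\vect_{X_1 \cap X_2}$ is simultaneously a linear combination of $\vect_{X_1}$ and of $\vect_{X_2}$. No use of linear independence is needed here.

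For the reverse inclusion, I would take an arbitrary $\vect \in \Span(\vect_{X_1}) \cap \Span(\vect_{X_2})$ and write it in two ways: $\vect = \sum_{x \in X_1} \lambda_x \vect_x = \sum_{x \in X_2} \mu_x \vect_x$ for scalars $\lambda_x, \mu_x \in \mathbb{R}$. Subtracting the two expressions yields
\[
\sum_{x \in X_1 \setminus X_2} \lambda_x \vect_x + \sum_{x \in X_1 \cap X_2} (\lambda_x - \mu_x)\vect_x - \sum_{x \in X_2 \setminus X_1} \mu_x \vect_x = \mathbf{0}.
\]
This is a vanishing linear combination of the vectors indexed by $X_1 \cup X_2 \subseteq \calX$. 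Since $(\vect_x)_{x \in \calX}$ is linearly independent, so is the subfamily indexed by $X_1 \cup X_2$, and hence every coefficient above must be zero. In particular $\lambda_x = 0$ for all $x \in X_1 \setminus X_2$, and therefore
\[
\vect = \sum_{x \in X_1} \lambda_x \vect_x = \sum_{x \in X_1 \cap X_2} \lambda_x \vect_x \in \Span(\vect_{X_1 \cap X_2}),
\]
which is the desired inclusion.

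The proof has no real obstacle; the only nontrivial step is recognizing that linear independence of $(\vect_x)_{x \in \calX}$ is exactly what forces any two representations of a common vector to agree coordinate-wise on $X_1 \cap X_2$ and vanish off of it. Without linear independence the claim can fail (e.g., if $\vect_{x_1} = \vect_{x_2}$ for $x_1 \neq x_2$, then taking $X_1 = \{x_1\}$, $X_2 = \{x_2\}$ gives $\Span(\vect_{X_1}) \cap \Span(\vect_{X_2})$ a full line but $X_1 \cap X_2 = \emptyset$), which is why the hypothesis is essential. Lemma~\ref{lm:unionintersection} then follows by applying the claim both in $\calX$ (giving $\Span(\vect_{X_1 \cap X_2}) = \Span(\vect_{X_1}) \cap \Span(\vect_{X_2})$) and in $\calY$ (giving $\Span(\vect_{Y_1 \cap Y_2}) = \Span(\vect_{Y_1}) \cap \Span(\vect_{Y_2})$), and observing that the common subspace equals $\Span(\vect_{X_1}) = \Span(\vect_{Y_1})$ intersected with $\Span(\vect_{X_2}) = \Span(\vect_{Y_2})$, so $\Span(\vect_{X_1 \cap X_2}) = \Span(\vect_{Y_1 \cap Y_2})$, certifying an exact match.
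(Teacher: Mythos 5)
Your proof is correct and follows essentially the same route as the paper: both arguments use linear independence to force the coefficients outside $X_1\cap X_2$ to vanish, the paper via uniqueness of the expansion over all of $\calX$ and you via subtracting the two expansions over $X_1$ and $X_2$, which are equivalent formulations of the same idea. Your closing remarks on the necessity of the hypothesis and the deduction of Lemma~\ref{lm:unionintersection} also match the paper's usage of the claim.
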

\begin{proof}
$\Span(\vect_{X_1\cap X_2})\subseteq \Span(\vect_{X_1})\cap\Span(\vect_{X_2})$ is obvious. To show $\Span(\vect_{X_1})\cap\Span(\vect_{X_2})\subseteq \Span (\vect_{X_1\cap X_2})$, let's consider a vector $\vect\in \Span(\vect_{X_1})\cap\Span(\vect_{X_2})\subseteq\Span(\vect_{\calX})$.
%
%
Note that $(\vect_x)_{x\in \calX}$ is linearly independent, so there exists unique $\lambda_x\in\mathbb R$ for each $x\in \calX$ s.t. $\vect=\sum\limits_{x\in \calX}\lambda_x\vect_x$. The uniqueness of $\lambda_x$ and the fact that $\vect\in\Span(\vect_{X_1})$ shows that $\forall x\in \calX\backslash X_1,\lambda_x=0$. Similarly, $\forall x\in \calX\backslash X_2,\lambda_x=0$. Therefore, $\lambda_x\neq 0$ only when $x\in X_1\cap X_2$, so $\vect\in\Span(\vect_{X_1\cap X_2})$.
\end{proof}

\begin{customthm}8
\label{thm:simple=minimum}
Assume $(\vect_x)_{x\in \calX}$ and $(\vect_y)_{y\in \calY}$ are both linearly independent. Let $(X^*,Y^*)$ be the maximum (exact) match in $(\calX,\calY)$. $\forall v\in X^*\cup Y^*$, the $v$-minimum match is a simple match, and every simple match is the $v$-minimum match for some neuron $v\in X^*\cup Y^*$.
\end{customthm}
\begin{proof}
We show that under the assumption of Theorem \ref{thm:simple=minimum} the concept of $v$-minimum match and the concept of $v$-minimal match (Definition 9) coincide so Theorem \ref{thm:simple=minimum} is a special case of Theorem \ref{thm:simple=minimal}.

According to Lemma \ref{lm:unionintersection}, $\forall v\in X^*\cup Y^*$ has a unique $v$-minimum match $(X_v, Y_v)$ being the intersection of all the matches containing $v$. Therefore, for any $v$-minimal match $(X_v',Y_v')$, it holds that $X_v\subseteq X_v', Y_v\subseteq Y_v'$, and according to Definition 9 we have $(
X_v,Y_v)=(X_v',Y_v')$.
\end{proof}




\begin{customthm}{10}
\label{thm:simple=minimal}
Let $(X^*,Y^*)$ be the maximum match in $(\calX,\calY)$. $\forall v \in X^* \cup Y^*$, every $v$-minimal match is a simple match, and every simple match is a $v$-minimal match for some $v\in X^*\cup Y^*$.
\end{customthm}
\begin{proof}
We start by showing the first half of the lemma. To prove by contradiction, let's assume that a $v$-minimal match $(X_v,Y_v)$ can be written as the union of smaller matches $(X_i,Y_i)$, i.e., $(X_i\cup Y_i)\subsetneq (X_v\cup Y_v), X_v=\bigcup\limits_iX_i,Y_v=\bigcup\limits_iY_i$. In this case, one of the matches $(X_i,Y_i)$ contains $v$, which is contradictory with the definition of $v$-minimal match.

Now we show the second half of the lemma. For any neuron $v$ in a simple match $(\hat X,\hat Y)$, we consider one of the smallest matches $(X_v,Y_v)$ in $(\hat X,\hat Y)$ containing $v$. Here, ``smallest'' means that $|X_v\cup Y_v|$ is the smallest among all matches in $(\hat X,\hat Y)$ containing $v$. Note that $(\hat X,\hat Y)$ is itself a match containing $v$, such a smallest match $(X_v,Y_v)$ indeed exists. The fact that $(X_v,Y_v)$ is the smallest implies that it's a $v$-minimal match. Now, trivially we have $\hat X=\bigcup\limits_{v\in \hat X\cup\hat Y}X_v$ and $\hat Y=\bigcup\limits_{v\in \hat X\cup\hat Y}Y_v$, and since $(\hat X,\hat Y)$ is a simple match, one of $(X_v,Y_v)$ has to be equal to $(\hat X,\hat Y)$, which proves the second half of the lemma.
\end{proof}
\begin{remark}
One important thing to note is that ``$v\in X^*\cup Y^*$'' in the second half of Lemma \ref{thm:simple=minimal} cannot be replaced by ``$v\in X^*$''. For example, let $\vect_\calX=\{(\sqrt{\frac 12},\sqrt{\frac 12},0),(\sqrt{\frac 12},-\sqrt{\frac 12},0)\}$ and $\vect_\calY=\vect_\calX\cup\{(\sqrt{1-\epsilon^2},0,\epsilon)\}$. In this case, the entire match $(\calX,\calY)$ cannot be expressed as a union of $x$-minimal matches for $x\in \calX$ because no $x$-minimal match contains the vector in $\vect_\calY\backslash \vect_\calX$. However, in the case of
Theorem \ref{thm:simple=minimum} when $\epsilon=0$ and $(\vect_x)_{x\in \calX},(\vect_y)_{y\in \calY}$ are both linearly independent, we can perform the replacement.  That is because in this case, every match $(X,Y)$ satisfies $|X|=|Y|$, so we know $\hat X=\bigcup\limits_{v\in \hat X}X_v$ and $\hat Y\supseteq \bigcup\limits_{v\in \hat X}Y_v$ imply $(\hat X,\hat Y)=(\bigcup\limits_{v\in \hat X}X_v,\bigcup\limits_{v\in \hat X}Y_v)$.
\end{remark}

\section{Omitted Proofs in Section 4}
\label{sec:sup_alg}

\begin{customthm}{11}
\label{lm:alg_max_mat}
Algorithm 1 outputs the maximum match and runs in polynomial time.
\end{customthm}
\begin{proof}
Every time we delete a neuron $x$ (or $y$) from $X^*$ (or $Y^*$) at Line 7 (or Line 13), we make sure that the activation vector $\vect_{x}$ (or $\vect_y$) cannot be linearly expressed by $\vect_{Y^*}$ (or $\vect_{X^*}$) within error $\epsilon$, so $(X^*,Y^*)$ always contains the maximum match. On the other hand, when the algorithm terminates, we know $\forall x\in X^*$, $\vect_x$ is linearly expressible by $\vect_{Y^*}$ within error $\epsilon$ and $\forall y\in Y^*$, $\vect_y$ is linearly expressible by $\vect_{X^*}$ within error $\epsilon$, so $(X^*,Y^*)$ is a match by definition. Therefore, the output $(X^*,Y^*)$ of Algorithm 1 is a match containing the maximum match, which has to be the maximum match itself.

Before entering each iteration of the algorithm, we make sure that at least a neuron is deleted from $X^*$ or $Y^*$ in the last iteration, so there are at most $|X\cup Y|$ iterations. Therefore, the algorithm runs in polynomial time.
\end{proof}
\begin{customthm}{12}
\label{lm:alg_one_min_mat}
Algorithm 2 outputs one $v$-minimal match for the given neuron $v$. If $\epsilon=0$ (exact match), the algorithm outputs the unique $v$-minimum match provided $(\vect_x)_{x \in \mathcal{X}}$ and $(\vect_y)_{y \in \mathcal{Y}}$ are both linearly independent. Moreover, the algorithm always runs in polynomial time.
\end{customthm}
\begin{proof}
Clearly, Algorithm 2 runs in polynomial time. If there exist at least one $v$-minimal matches, then $v$ has to be in the maximum match and thus the algorithm doesn't return ``failure''. Therefore, the remaining is to show that the match $(X_v,Y_v)$ returned by the algorithm is indeed a $v$-minimal match.

Clearly, the first requirement of $v$-minimal match that $v\in X_v\cup Y_v$ is obviously satisfied by the algorithm. Now we prove that the second requirement is also satisfied. Consider a match $(X_0,Y_0)$ with $X_0\subseteq X_v$ and $Y_0\subseteq Y_v$ that satisfies $v\in X_0\cup Y_0$. We want to show that $(X_0,Y_0)=(X_v,Y_v)$. To prove by contradiction, let's suppose $u\in(X_v\cup Y_v)\backslash(X_0\cup Y_0)$. Let's consider $(X,Y)$ and $(X^*,Y^*)$ at Line 11 in the iteration when $u$ is being picked by the algorithm at Line 5. Since $u\notin X_0\cup Y_0$, we know $X_0\subseteq X_v\backslash\{u\}\subseteq X$ and $Y_0\subseteq Y_v\backslash\{u\}\subseteq Y$. In other words, $(X_0,Y_0)$ is a match in $(X,Y)$. Moreover, since $u\in X_v\cup Y_v$, we know the ``if'' condition at Line 11 is not satisfied, i.e., $v\notin (X^*,Y^*)$. Therefore, $(X_0\cup X^*,Y_0\cup Y^*)$ is a match in $(X,Y)$ that is strictly larger than $(X^*,Y^*)$ (note that $v\in X_0\cup Y_0$), a contradiction with $(X^*,Y^*)$ being the maximum match in $(X,Y)$.
\end{proof}

\begin{customthm}{13}
\label{lm:alg_all_min_mat}
Algorithm 3 outputs all the $N_v$ different $v$-minimal matches in time $L^{O(N_v)}$. With Algorithm 3, we can find all the simple matches by exploring all $v\in\calX\cup \calY$ based on Theorem \ref{thm:simple=minimal}.
\end{customthm}

\begin{proof}
The fact that we remove all $u_i$ from $X$ and $Y$ in each iteration implies that every time we put a match into $\mathcal S$, the match is different from the existing matches in $\mathcal S$. Moreover, every time we put a match into $\mathcal S$, the match is a $v$-minimal match, so $|\mathcal S|\leq N_v$ during the whole algorithm. Therefore, the running time of the algorithm is $L^{O(N_v)}$. The remaining is to show that $\mathcal S$ returned by the algorithm contains all the $v$-minimal matches. To prove by contradiction, suppose $\mathcal S=\{(X_1,Y_1),(X_2,Y_2),\cdots,(X_k,Y_k)\}$ while there exists a $v$-minimal match $(X_{k+1},Y_{k+1})$ that is not in $S$. By the fact that $(X_{k+1},Y_{k+1})$ is minimal, we know $X_i\cup Y_i$ is not a subset of $X_{k+1}\cup Y_{k+1}$ for $i=1,2,\cdots,k$. Therefore, there exists $u_i\in (X_i\cup Y_i)\backslash (X_{k+1}\cup Y_{k+1})$ for $i=1,2,\cdots,k$. Consider the iteration when we pick $(u_1,u_2,\cdots,u_k)$ at Line 7. We know the ``if'' condition at Line 14 is satisfied because of $(X_{k+1},Y_{k+1})$, which then implies that $(X_{k+1},Y_{k+1})\in\mathcal S$, a contradiction.
\end{proof}


\begin{customthm}{14}
\label{thm:ind_sta}
Suppose $\exists\theta\in(0,\frac\pi 2]$ such that $(\vect_x)_{x\in \calX}$ and $(\vect_y)_{y\in \calY}$ both satisfy $\theta$-strong linear independence and $(\epsilon,\frac{2}{\sin\theta}+1)$-stability. Then, $\forall v\in \calX\cup\calY,N_v\leq 1$.
As a consequence, Algorithm 3 finds all the $v$-minimal matches in polynomial time, and we can find all the simple matches in polynomial time by exploring all $v\in\calX\cup \calY$ based on Theorem \ref{thm:simple=minimal}.
\end{customthm}
Theorem \ref{thm:ind_sta} is a direct corollary of the following lemma.
\begin{lemma}
\label{lm:ind_sta}
Suppose $(\vect_x)_{x\in \calX}$ and $(\vect_y)_{y\in \calY}$ both satisfy $\theta$-strong linear independence and $(\epsilon,\frac{2}{\sin\theta}+1)$-stability. Let $(X_1,Y_1),(X_2,Y_2)$ be two matches in $(\calX,\calY)$. Then, $(X_1\cap X_2,Y_1\cap Y_2)$ is also a match.
\end{lemma}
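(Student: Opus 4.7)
The plan is to prove, for every $x \in X_1 \cap X_2$, the distance bound $\dist(\vect_x, \Span(\vect_{Y_1 \cap Y_2})) \leq \lambda \epsilon |\vect_x|$ with $\lambda = 1 + 2/\sin\theta$, and then invoke $(\epsilon, \lambda)$-stability (which forbids distances in the interval $(\epsilon|\vect_x|, \lambda\epsilon|\vect_x|]$) to collapse this inequality to the desired match bound $\dist(\vect_x, \Span(\vect_{Y_1 \cap Y_2})) \leq \epsilon |\vect_x|$. The symmetric argument handles $y \in Y_1 \cap Y_2$, and the two conditions together say exactly that $(X_1 \cap X_2, Y_1 \cap Y_2)$ is a match.

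Fix $x \in X_1 \cap X_2$. By the definition of match, pick $\vect_j' \in \Span(\vect_{Y_j})$ with $\|\vect_x - \vect_j'\| \leq \epsilon|\vect_x|$ for $j = 1, 2$; the triangle inequality gives $\|\vect_1' - \vect_2'\| \leq 2 \epsilon |\vect_x|$. Set $A = Y_1 \cap Y_2$ and $B = Y_1 \setminus Y_2$, and assume first that both are non-empty. Since $\theta$-strong linear independence with $\theta > 0$ forces $\Span(\vect_A) \cap \Span(\vect_B) = \{0\}$, the vector $\vect_1' \in \Span(\vect_{Y_1})$ decomposes uniquely as $\vect_1' = a_1 + b_1$ with $a_1 \in \Span(\vect_A)$ and $b_1 \in \Span(\vect_B)$. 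Since $a_1 \in \Span(\vect_{Y_1 \cap Y_2})$, once $\|b_1\|$ is controlled the triangle inequality will give $\dist(\vect_x, \Span(\vect_A)) \leq \|\vect_x - a_1\| \leq \epsilon|\vect_x| + \|b_1\|$.

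The crux is the rewriting $b_1 = \vect_1' - a_1 = (\vect_1' - \vect_2') + (\vect_2' - a_1)$. The second summand lies in $\Span(\vect_{Y_2})$ because $a_1 \in \Span(\vect_A) \subseteq \Span(\vect_{Y_2})$, so $\dist(b_1, \Span(\vect_{Y_2})) \leq \|\vect_1' - \vect_2'\| \leq 2\epsilon|\vect_x|$. On the other hand, since $B$ and $Y_2$ are disjoint non-empty subsets of $\calY$, $\theta$-strong linear independence forces the angle between $\Span(\vect_B)$ and $\Span(\vect_{Y_2})$ to be at least $\theta$; combined with $b_1 \in \Span(\vect_B)$ this yields $\dist(b_1, \Span(\vect_{Y_2})) \geq \|b_1\|\sin\theta$. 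Comparing the two inequalities gives $\|b_1\| \leq 2\epsilon|\vect_x|/\sin\theta$, and therefore $\dist(\vect_x, \Span(\vect_A)) \leq (1 + 2/\sin\theta)\epsilon|\vect_x| = \lambda \epsilon |\vect_x|$ as required.

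The main obstacle I anticipate is resisting the natural but lossy step of splitting $\vect_1' - \vect_2'$ simultaneously along $\Span(\vect_B)$ and $\Span(\vect_{Y_2 \setminus Y_1})$; doing so spends two factors of $1/\sin\theta$ and yields the weaker bound $(1 + 2/\sin^2\theta)\epsilon|\vect_x|$, which strictly exceeds $\lambda\epsilon|\vect_x|$ and cannot be closed by stability. Comparing $b_1$ against the \emph{single} larger subspace $\Span(\vect_{Y_2})$ is what saves the missing factor. The edge cases are mild: if $B = \emptyset$ then $\vect_1' \in \Span(\vect_A)$ already and the conclusion is immediate; if $A = \emptyset$ the same chain (with $a_1 = 0$) gives $\|\vect_x\| \leq \lambda \epsilon |\vect_x|$, which $(\epsilon, \lambda)$-stability applied to $Y = \emptyset$ collapses to $\|\vect_x\| \leq \epsilon|\vect_x|$, forcing $\vect_x = 0$ in contradiction with the requirement $\mathbf 0 \notin \vect_\calX$ built into strong linear independence; so $A = \emptyset$ implies $X_1 \cap X_2 = \emptyset$ and the intersection match is the trivial empty one.
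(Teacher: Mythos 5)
Your proof is correct and follows essentially the same route as the paper's: decompose the approximant in $\Span(\vect_{Y_1})$ into its $\Span(\vect_{Y_1\cap Y_2})$ and $\Span(\vect_{Y_1\setminus Y_2})$ parts, bound the latter by comparing against the single subspace $\Span(\vect_{Y_2})$ via the $\theta$-angle (gaining exactly one factor $1/\sin\theta$), and then collapse the resulting $(1+2/\sin\theta)\epsilon|\vect_x|$ bound to $\epsilon|\vect_x|$ by stability. The only difference is cosmetic (you work with nearest points and an explicit direct-sum decomposition instead of coefficients $\mu_y,\mu'_y$, and you spell out the empty-set edge cases the paper leaves implicit).
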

\begin{proof}
$\forall x\in X_1\cap X_2$, let $|\vect_x-\sum\limits_{y\in Y_1}\mu_y\vect_y|\leq\epsilon|\vect_x|$ and $|\vect_x-\sum\limits_{y\in Y_2}\mu'_y\vect_y|\leq\epsilon|\vect_x|$. Therefore, $|\sum\limits_{y\in Y_1\backslash Y_2}\mu_y\vect_y+(\sum\limits_{y\in Y_1\cap Y_2}(\mu_y-\mu'_y)\vect_y-\sum\limits_{y\in Y_2\backslash Y_1}\mu'_y\vect_y)|\leq 2\epsilon|\vect_x|$, which implies that $\dist(\sum\limits_{y\in Y_1\backslash Y_2}\mu_y\vect_y,\Span(\vect_{Y_2}))\leq 2\epsilon|\vect_x|$. Note that by $\theta$-strong linear independence, we have the angle between $\Span(\vect_{Y_1\backslash Y_2})$ and $\Span(\vect_{Y_2})$ is at least $\theta$. Therefore, $|\sum\limits_{y\in Y_1\backslash Y_2}\mu_y\vect_y|\sin\theta\leq 2\epsilon|\vect_x|$, i.e., $|\sum\limits_{y\in Y_1\backslash Y_2}\mu_y\vect_y|\leq \frac{2\epsilon|\vect_x|}{\sin\theta}$. Together with $|\vect_x-\sum\limits_{y\in Y_1}\mu_y\vect_y|\leq\epsilon|\vect_x|$, we know $|\vect_x-\sum\limits_{y\in Y_1\cap Y_2}\mu_y\vect_y|\leq \epsilon |\vect_x|+\frac{2\epsilon|\vect_x|}{\sin\theta}=(\frac{2}{\sin\theta}+1)\epsilon|\vect_x|$. By $(\epsilon,\frac 2{\sin\theta}+1)$-stability, we know $\dist(\vect_x,\Span(\vect_{Y_1\cap Y_2})|\leq\epsilon|\vect_x|$. Similarly, we can prove $\dist(\vect_y,\Span(\vect_{X_1\cap X_2}))\leq \epsilon|\vect_y|$ for every $y\in Y_1\cap Y_2$.
\end{proof}

\section{Complicated Aspects of the Structure of Matches}
\paragraph{Matches are not closed under the difference operation.} Let's consider the case where $d=2,\calX=\{x_1,x_2\},\calY=\{y_1,y_2\},z_{x_1}=z_{y_1}=(1,0),z_{x_2}=(0,1),z_{y_2}=(1,1)$. When $\epsilon$ is sufficiently small, there are two non-empty matches in total: $(\{x_1\},\{y_1\})$ and $(\{x_1,x_2\},\{y_1,y_2\})$. The difference of the two matches, $(\{x_2\},\{y_2\})$ is not a match. 
\paragraph{A simple match might be a proper subset of another simple match.} In the example given in the last paragraph, $(\{x_1\},\{y_1\})$ and $(\{x_1,x_2\},\{y_1,y_2\})$ are both simple matches, while $\{x_1\}\subsetneq\{x_1,x_2\}$ and $\{y_1\}\subsetneq\{y_1,y_2\}$.

\paragraph{The decomposition of a match into a union of simple matches might not be unique.} A trivial example is a simple match $(\hat X_1,\hat Y_1)$ being a proper subset of another simple match $(\hat X_2,\hat Y_2)$, where $(\hat X_2,\hat Y_2)$ can also be decomposed as the union of $(\hat X_1,\hat Y_1)$ and $(\hat X_2,\hat Y_2)$, a different decomposition from the natural decomposition using only $(\hat X_2,\hat Y_2)$. A more non-trivial example is when $\calX=\{x_1,x_2,x_3\},\calY=\{y_1,y_2,y_3\},\vect_{x_1}=(0,1),\vect_{y_1}=(1,0),\vect_{x_2}=\vect_{y_2}=(1,1),\vect_{x_3}=\vect_{y_3}=(1,-1)$. When $\epsilon$ is sufficiently small, the entire match $(\calX,\calY)$ can be decomposed as the union of simple matches in two different ways: the union of $(\{x_1,x_2\},\{y_1,y_2\})$ and $(\{x_3\},\{y_3\})$, or the union of $(\{x_1,x_3\},\{y_1,y_3\})$ and $(\{x_2\},\{y_2\})$.

\section{Instances without Strong Linear Independence or Stability}
In this section, we prove Lemmas \ref{lm:ind} and \ref{lm:sta} to show that neither of the two assumptions in Theorem \ref{thm:ind_sta} can be removed. Specifically, Lemma \ref{lm:ind} shows that we cannot remove the strong linear independence assumption, even for $\epsilon=0$ where the stability assumption is trivial. Lemma \ref{lm:sta} shows that we cannot remove the stability assumption, even when the instance satisfies $\frac\pi 2$-strong linear independence (orthogonality).
\begin{lemma}
\label{lm:ind}
There exist instances $(\vect_v)_{v\in \calX\cup \calY}$ where there are more than polynomial number of simple exact matches. Moreover, the number of linear subspaces formed by these simple exact matches is also more than polynomial.
\end{lemma}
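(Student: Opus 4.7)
The plan is to exhibit a single explicit instance that simultaneously witnesses super-polynomially many simple exact matches and super-polynomially many distinct span subspaces. Concretely, I would take $d=n$, $\calX=\{x_1,\ldots,x_n\}$ with $\vect_{x_i}=e_i$ (the standard basis of $\mathbb{R}^n$), and $\calY=\{y_S:\emptyset\neq S\subseteq[n]\}$ with $\vect_{y_S}=\mathbf{1}_S=\sum_{i\in S}e_i$. The central objects are the ``chain matches''
\[
(X,Y_\sigma):=\bigl(X,\,\{y_{\{a_1\}},\,y_{\{a_1,a_2\}},\,\ldots,\,y_{\{a_1,\ldots,a_k\}}\}\bigr),
\]
indexed by a non-empty subset $X=\{a_1,\ldots,a_k\}\subseteq[n]$ together with an ordering $\sigma=(a_1,\ldots,a_k)$ of its elements. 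I claim that every such $(X,Y_\sigma)$ is a simple exact match whose span is the coordinate subspace $E_X:=\Span(e_i:i\in X)$.

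First, the match condition is immediate: $\Span(\vect_X)=E_X$, and telescoping via $\mathbf{1}_{\{a_1,\ldots,a_j\}}-\mathbf{1}_{\{a_1,\ldots,a_{j-1}\}}=e_{a_j}$ shows $\Span(\vect_{Y_\sigma})$ contains every $e_{a_j}$ and hence equals $E_X$. Next, to prove simplicity I would write $Y_\sigma=\{y_{T_1},\ldots,y_{T_k}\}$ with $T_j=\{a_1,\ldots,a_j\}$ and analyze an arbitrary sub-match $(X',Y')$ with $X'\subseteq X$ and $Y'=\{y_{T_{j_1}},\ldots,y_{T_{j_s}}\}$, $j_1<\cdots<j_s$. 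Telescoping gives
\[
\Span(\vect_{Y'})=\Span\!\bigl(\mathbf{1}_{T_{j_1}},\,\mathbf{1}_{T_{j_2}\setminus T_{j_1}},\,\ldots,\,\mathbf{1}_{T_{j_s}\setminus T_{j_{s-1}}}\bigr),
\]
an $s$-dimensional space with basis indicator vectors supported on pairwise disjoint sets. Since $\Span(\vect_{X'})=E_{X'}$ is a coordinate subspace of dimension $|X'|=\sum_l|T_{j_l}\setminus T_{j_{l-1}}|$, equating the two spans forces $s=\sum_l|T_{j_l}\setminus T_{j_{l-1}}|$, which in turn forces every interval $T_{j_l}\setminus T_{j_{l-1}}$ to be a singleton; hence $j_l=l$, $Y'$ is a prefix of $Y_\sigma$, and $X'=T_s$. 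A decomposition $(X,Y_\sigma)=\bigcup_i(X'_i,Y'_i)$ by strictly smaller sub-matches would then consist of prefixes with $X'_i=T_{m_i}$, $m_i<k$, and $\bigcup_iX'_i\subseteq T_{\max m_i}\subsetneq X$, contradicting $\bigcup_iX'_i=X$.

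Finally I would count. Summing over all non-empty $X\subseteq[n]$ and all $|X|!$ orderings yields at least $\sum_{k=1}^n\binom{n}{k}\,k!\geq n!$ distinct simple matches, which is super-polynomial in $n$. Their spans are exactly the $2^n-1$ distinct coordinate subspaces $\{E_X:\emptyset\neq X\subseteq[n]\}$, which is also super-polynomial in $n$, establishing both halves of the lemma.

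The main obstacle is the simplicity step, specifically the linear-algebraic observation that a span of indicator vectors with pairwise disjoint supports equals a coordinate subspace only when each support is a singleton. Once this fact is in hand, it pins the sub-match structure of a chain down to its prefixes, which is exactly what rules out any non-trivial decomposition and drives the whole counting argument.
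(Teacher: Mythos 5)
Your chain construction does establish the first assertion: the telescoping argument correctly identifies all sub-matches of $(X,Y_\sigma)$ as prefixes, so each chain is indeed a simple exact match, and $n!$ of them is super-polynomial even when measured against the true instance size $(|\calX|+|\calY|)\cdot d\approx n2^n$ rather than against $n$. The genuine gap is the second assertion. In your instance every $\vect_x$ is a standard basis vector, so for \emph{any} exact match $(X',Y')$ the common span $\Span(\vect_{X'})=\Span(\vect_{Y'})$ is a coordinate subspace of $\mathbb{R}^n$; hence all matches (simple or not) span at most $2^n$ distinct subspaces, which is fewer than the number of neurons $|\calX|+|\calY|=n+2^n-1$. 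The phrase ``more than polynomial'' in this lemma must be read relative to the size of the instance --- that is exactly how the lemma is used, to show that without strong linear independence no algorithm enumerating simple matches or their subspaces can be polynomial in the input --- and by that yardstick your subspace count is linear, not super-polynomial. Your own count of ``$2^n-1$ subspaces, super-polynomial in $n$'' is measuring against the wrong parameter: since $|\calY|$ is already $2^n-1$, being exponential in $n$ buys nothing. Note also that this cannot be patched by modifying $\calY$ alone; the coordinate-subspace collapse is forced by taking $\vect_\calX$ to be the standard basis.

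For comparison, the paper's construction is built precisely to make the subspaces numerous: it takes $|\calX|=|\calY|=N\approx d^k$ random unit vectors placed inside intersections of $d-\binom{ck}{k}+(c-1)k$ hyperplanes chosen in general position, and shows that almost surely the simple matches are exactly the intersections of $d-\binom{ck}{k}$ of the hyperplanes. There are about $d^{ck}=N^{c}$ such intersections, each a \emph{distinct} subspace, and $c$ is a free parameter, so both the number of simple matches and the number of matched subspaces exceed any fixed polynomial in the instance size. To repair your proof you would need an instance in which the spanned subspaces themselves are super-polynomially many relative to $|\calX|+|\calY|$, which your construction structurally rules out.
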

\begin{proof}
Suppose $k,c\geq 2$ are positive integers and $d>\binom {ck}{k}$ is the dimension of the space $\mathbb R^d$ we are considering. Suppose we have a set $S$ of $d-\binom{ck}{k}+ck$ subspaces of dimension $d-1$ in general position. In other words, the normal vectors $\mathbf n_s$ for all $s\in S$ are linearly independent.\footnote{Note that when $k,c\geq 2$, we have $d-\binom{ck}{k}+ck<d$.} Let $\calX=\{x_1,x_2,\cdots,x_N\}$ and $\calY=\{y_1,y_2,\cdots,y_N\}$ for $N=\binom{d-\binom{ck}{k}+ck}{d-\binom{ck}k+(c-1)k}$. For each $\Big(d-\binom{ck}k+(c-1)k\Big)$-sized subset $S_i\subseteq S$, we independently pick two random unit vectors $\vect_{x_i},\vect_{y_i}$ in the subspace $t_i:=\bigcap\limits_{s\in S_i}s$, i.e., $\vect_{x_i},\vect_{y_i}$ are independently picked uniformly from $B_0(1)\cap t_i$.

Here, the size of $\calX$ and $\calY$ are both $\binom{d-\binom{ck}{k}+ck}{d-\binom{ck}k+(c-1)k}$, which is roughly $d^k$ when $d$ is very large. We are going to show that, almost surely, there are roughly $d^{ck}$ simple matches in this case. Note that $c$ and $k$ are arbitrary at the very beginning and $d$ can grow arbitrarily large for any fixed $c$ and $k$, this leads to the correctness of Lemma \ref{lm:ind}.

First, for any $S'\subseteq S$ with size $|S'|>d-\binom{ck}k$, we show that the number of neurons $x\in \calX$ with $\vect_x\in h:=\bigcap\limits_{s\in S'}s$ is almost surely less than $d-|S'|$, and by symmetry, this claim also holds for neurons $y\in \calY$. The claim is obvious for $|S'|>d-\binom{ck}{k}+(c-1)k$, because in this case, almost surely, there isn't any neuron $x\in \calX$ with $\vect_x\in h$, while $d-|S'|\geq d-|S|=\binom{ck}k-ck>0$. Now we consider the case where $d-\binom{ck}{k}<|S'|\leq d-\binom{ck}{k}+(c-1)k$. Let $\ell:=|S'|-(d-\binom{ck}k)\in\{1,2,\cdots,(c-1)k\}$. According to our procedure, $S_i$ are $d-\binom{ck}k+(c-1)k$ sized subsets of $S$, so there are $\binom{ck-\ell}{k}$ different $S_i$ containing $S'$. Therefore, almost surely, the number of neurons $x\in X$ with $\vect_x\in h$ is exactly $\binom{ck-\ell}{k}$. The rest is to show that $\binom{ck-\ell}{k}<d-|S'|$. In fact, $d-|S'|-\binom{ck-\ell}{k}=-\ell+\binom{ck}k-\binom{ck-\ell}{k}=-\ell+\sum\limits_{i=0}^{\ell-1}\binom{ck-\ell+i}{k-1}>-\ell+\sum\limits_{i=0}^{\ell-1}1=0$. The last inequality is based on the fact that $0<k-1<ck-\ell$.

The claim we showed above implies that for any $S'\subseteq S$ with size $|S'|>d-\binom{ck}{k}$, almost surely, $h:=\bigcap\limits_{s\in S'}s$ is not the subspace formed by any match, because there are not enough vectors in $h$ to span the $d-|S'|$ dimensional space $h$.

Next, we show that, almost surely, there is no match spanning a linear subspace of dimension $0<\ell<\binom{ck}{k}$. Otherwise, by permuting the indices and considering only linearly independent vectors in a match, we can assume without loss of generality that with non-zero probability, $(\{x_1,x_2,\cdots,x_{\ell}\},\{y_{\sigma(1)},y_{\sigma(2)},\cdots,y_{\sigma(\ell)}\})$ is a match spanning an $\ell$ dimensional space. In our procedure, $\vect_{x_i}$ is a unit vector randomly picked from the space $t_i=\bigcap_{s\in S_i}s$ where $S_i$ is a subset of $S$. Note that if $t_i$ is not a subspace of $\hat Y:=\Span(\{y_{\sigma(1)},y_{\sigma(2)},\cdots,y_{\sigma(\ell)}\})$, then almost surely, $\vect_{x_i}$ doesn't belong to $\hat Y$. Therefore, ignoring the event of probability zero, we know that with non-zero probability every $t_i$ is a subspace of $\hat Y$ for $i=1,2,\cdots,\ell$,  i.e., $\sum\limits_{i=1}^{\ell} t_i\subseteq \hat Y$. Here, the summation is over linear subspaces and the sum of linear subspaces is defined to be the linear space spanned by the subspaces. Using the fact that $A^{\perp}+B^\perp=(A\cap B)^\perp$, we have $t_i=\bigcap\limits_{s\in S_i}(s^\perp)^\perp=(\sum\limits_{s\in S_i}s^\perp)^\perp$, so $\sum\limits_{i=1}^{\ell}t_i=(\bigcap\limits_{i=1}^{\ell}\sum\limits_{s\in S_i}s^\perp)^\perp=(\sum_{s\in\bigcap_{i=1}^{\ell}S_i}s^\perp)^\perp$. The correctness of the last equality is because $s^\perp=\Span(\{\mathbf n_s\})$ are linearly independent for different $s$ (see Claim \ref{claim:intersect}). Therefore, $\sum\limits_{i=1}^{\ell} t_i=\bigcap_{s\in \bigcap_{i=1}^{\ell}S_i}s$, so the dimension of $\sum\limits_{i=1}^{\ell} t_i$ is $d-|\bigcap\limits_{i=1}^{\ell}S_i|$ while the dimension of $\hat Y$ is $\ell$, and thus $d-|\bigcap\limits_{i=1}^{\ell}S_i|\leq \ell<\binom{ck}{k}$. According to the claim we showed before, the number of neurons in $X$ with $\vect_x$ belonging to $\sum\limits_{i=1}^{\ell} t_i$ is almost surely less than $d-|\bigcap\limits_{i=1}^{\ell}S_i|\leq \ell$, which is a contradiction.

Then, we show that, almost surely, $(\vect_x)_{x\in X}$ for $X\subseteq \calX$ with size $|X|\leq \binom{ck}k$ is linearly independent, and by symmetry this also holds for $Y\subseteq \calY$. Otherwise, the smallest subset $X$ making $(\vect_x)_{x\in X}$ not linearly independent with non-zero probability has size $\ell \leq \binom{ck}k$. Without loss of generality, we assume $X=\{x_1,x_2,\cdots,x_\ell\}$. $X$ has the minimum size implies that, almost surely conditioned on $(\vect_x)_{x\in X}$ being not linearly independent, $\vect_{x_i}\in\Span(\{\vect_x:x\in X\backslash\{x_i\}\})=\Span(\{\vect_x:x\in X\})$ for every $1\leq i\leq \ell$. According to our procedure, $\vect_{x_i}$ is randomly picked from $t_i$, and as before, we know that every $t_i$ is a subspace of $\Span(\{\vect_x:x\in X\backslash\{x_i\}\})=\Span(\{\vect_x:x\in X\})$ for $i=1,2,\cdots,\ell$ with non-zero probability. Therefore, we know with non-zero probability, $\sum\limits_{i=1}^{\ell}t_i\subseteq \Span(\{\vect_x:x\in X\})$ has dimension at most $\ell-1<\binom{ck}k$, which leads to the same contradiction as before.

Finally, for any $S'\subseteq S$ with size $|S'|=d-\binom{ck}{k}$, we show that, almost surely, $h:=\bigcap\limits_{s\in S'}s$ is the subspace spanned by a match. In this case, the dimension of $h$ is $\binom{ck}k$ and there are $\binom{ck}k$ vectors in $(v_x)_{x\in \calX}$ (and in $(v_y)_{y\in \calY}$) belonging to $h$ according to our procedure. These vectors are almost surely linearly independent as we have shown before, so they span $h$, forming a match.

As we showed above, almost surely, every $S'\subseteq S$ with size $|S'|=d-\binom{ck}{k}$ is a simple match, so there are $\binom{d-\binom{ck}k+ck}{d-\binom{ck}k}$ simple matches, which is roughly $d^{ck}$ when $d$ is very large.
\end{proof}
The following lemma shows that we cannot remove the stability assumption, even when the instance satisfies $\frac\pi 2$-strong linear independence (orthogonality).
\begin{lemma}
\label{lm:sta}
$\forall \epsilon\in (0,\frac 13)$, there exist instances $(\vect_v)_{v\in \calX\cup \calY}$ satisfying $\frac\pi 2$-strong linear independence with exponential number of simple $\epsilon$-approximate matches.
\end{lemma}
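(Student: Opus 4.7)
The plan is an explicit construction: for each $\epsilon\in(0,1/3)$, I will exhibit orthonormal families $\vect_\calX,\vect_\calY$ admitting exponentially many simple $\epsilon$-approximate matches. The guiding principle is that when $\dist(\vect_x,\Span(\vect_Y))$ depends only on the cardinalities of $X$ and $Y$ (and not on which specific elements they contain), many subset pairs of a single ``critical'' cardinality are simultaneously $\epsilon$-matches and each is automatically simple.

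Concretely, I would take $n$ a power of $2$ (to guarantee a Hadamard matrix), set $d=n$, and let $H\in\{-1,+1\}^{n\times n}$ be a Hadamard matrix. Define $\calX=\{x_1,\ldots,x_n\}$ with $\vect_{x_i}=e_i$ (the standard basis of $\mathbb R^n$) and $\calY=\{y_1,\ldots,y_n\}$ with $\vect_{y_j}=H_j/\sqrt n$, the normalized $j$-th row of $H$. Both families are orthonormal, so $\frac{\pi}{2}$-strong linear independence holds trivially. For any $I,J\subseteq[n]$, writing $X_I=\{x_i:i\in I\}$ and $Y_J=\{y_j:j\in J\}$, orthonormality of the Hadamard rows yields
\[
\dist(\vect_{x_i},\Span(\vect_{Y_J}))^2
 \;=\; 1-\sum_{j\in J}\langle e_i,\vect_{y_j}\rangle^2
 \;=\; 1-\frac{|J|}{n},
\]
independent of $i$ (using $H_{ji}^2=1$), and symmetrically $\dist(\vect_{y_j},\Span(\vect_{X_I}))^2=1-|I|/n$. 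Hence $(X_I,Y_J)$ is an $\epsilon$-approximate match if and only if $|I|,|J|\ge k:=\lceil n(1-\epsilon^2)\rceil$.

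Simplicity and counting follow quickly. For any $(I,J)$ with $|I|=|J|=k$, every proper subset of $I$ (respectively of $J$) has cardinality strictly below $k$ and therefore cannot appear as the $\calX$-side (respectively $\calY$-side) of any $\epsilon$-match contained in $(X_I,Y_J)$. Thus $(X_I,Y_J)$ admits no decomposition into strictly smaller matches and is a simple match. Taking $I$ and $J$ independently over all $k$-subsets of $[n]$ produces $\binom{n}{k}^2$ distinct simple matches, and for any fixed $\epsilon\in(0,1/3)$ and $n$ chosen large enough that $n\epsilon^2\to\infty$, standard binomial estimates give $\binom{n}{k}^2=2^{\Omega(n)}$, yielding the desired exponential lower bound.

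The only step of substance is the distance identity, which is a one-line consequence of $H_{ji}^2=1$; orthonormality of Hadamard rows is standard, and simplicity and counting are then routine. A minor point is the existence of a Hadamard matrix, which is immediate for $n=2^m$; one could alternatively substitute any orthonormal basis whose matrix of coordinates has equimodular entries (e.g., the discrete Fourier matrix), leaving the calculation unchanged. As a sanity check, this instance massively violates $(\epsilon,3)$-stability, since the distances $\sqrt{1-|J|/n}$ densely fill the forbidden interval $(\epsilon|\vect_{x_i}|,3\epsilon|\vect_{x_i}|]$; this is consistent with Theorem \ref{thm:ind_sta}, which precludes exponentially many simple matches precisely when stability holds.
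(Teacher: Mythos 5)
Your construction is correct, and it proves the lemma with a genuinely different instance from the paper's, even though both are built from the standard basis and a Hadamard-type matrix. The paper also sets $\vect_{x_i}=e_i$, but it uses the Hadamard columns $\W_i$ only as a tiny perturbation, defining $\vect_{y_i}=(\sqrt{1-(n-1)\delta^2}-\delta)e_i+\delta\W_i$ with $\delta=\sqrt{2\epsilon^2/n}$, so that each $y_i$ is nearly identical to $x_i$. This forces every $\epsilon$-approximate match to be ``diagonal,'' i.e.\ of the form $(\{x_i:i\in S\},\{y_i:i\in S\})$, and a computation shows such a pair is a match iff $|S|\geq n/2$; the simple matches are those with $|S|=\lceil n/2\rceil$, giving $\binom{n}{\lceil n/2\rceil}$ of them. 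Your instance instead takes two mutually unbiased orthonormal bases (standard basis versus normalized Hadamard rows), so that $\dist(\vect_x,\Span(\vect_{Y_J}))^2=1-|J|/n$ and $\dist(\vect_y,\Span(\vect_{X_I}))^2=1-|I|/n$ depend only on cardinalities; matches are then exactly the pairs $(X_I,Y_J)$ with $|I|,|J|\geq k=\lceil n(1-\epsilon^2)\rceil$, and your simplicity argument and the count $\binom{n}{k}^2$ are sound (the only sub-match of a $k$-by-$k$ pair is the empty one, since one side of any proper sub-pair falls below $k$). What your route buys is brevity and transparency: a one-line projection identity replaces the paper's recursive matrix $A_n$ and its sign bookkeeping, and no case analysis is needed to pin matches to a diagonal. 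What the paper's version buys is an exponent independent of $\epsilon$ ($\binom{n}{\lceil n/2\rceil}=2^{\Theta(n)}$, versus your $2^{\Theta(\epsilon^2\log(1/\epsilon^2)\,n)}$, which is still exponential for each fixed $\epsilon\in(0,\frac13)$ and hence suffices for the statement), together with the qualitatively sharper message that the blow-up occurs even when the two neuron families are in near one-to-one correspondence. Your closing observation that the instance must violate $(\epsilon,3)$-stability is apt and mirrors the role of Theorem~\ref{thm:ind_sta} in the paper.
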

\begin{proof}
Let $\calX=\{x_1,x_2,\cdots,x_n\}$ and $\calY=\{y_1,y_2,\cdots,y_n\}$. Suppose the dimension $d$ is equal to $n$. (If we want $d>n$, we can append zeros to the coordinate of every vector.) Let $\vect_{x_i}=(\underbrace{0,0,\cdots,0}_{i-1},1,\underbrace{0,0,\cdots,0}_{n-i})\in \mathbb R^n$. Before we construct $\vect_{y_i}$, we first consider a sequence of matrices $A_{2^0},A_{2^1},A_{2^2},\cdots$ defined in the following way:
\begin{enumerate}
\item $A_1=[1]$;
\item $A_{2m}=\begin{bmatrix} A_m&A_m\\-A_m^T&A_m^T\end{bmatrix}$.
\end{enumerate}
We choose $n$ to be a power of 2. It's easy to show by induction that $A_nA_n^T=A_n^TA_n=nI,A_n+A_n^T=2I$ and every element on the diagonal of $A_n$ is 1. Now we define $\W_i\in\mathbb R^n$ to be the vector whose coordinate is the $i$th column of $A_n$ for $i=1,2,\cdots,n$. We have the following:
\begin{enumerate}
\item $|\W_i|=\sqrt n$;
\item $\W_i\cdot\W_j=0$ for $i\neq j$;
\item $\W_i\vect_{x_i}=1$;
\item $\W_i\vect_{x_j}=\pm 1$;
\item $\W_i\vect_{x_j}+\W_j\vect_{x_i}=0$ for $i\neq j$.
\end{enumerate}
Let $\delta=\sqrt{\frac{2\epsilon^2}{n}}$. We define $\vect_{y_i}=(\sqrt{1-(n-1)\delta^2}-\delta)\vect_{x_i}+\delta\W_i$. Now we have the following:
\begin{enumerate}
\item $|\vect_{y_i}|=1$;
\item $\vect_{y_i}\cdot\vect_{y_j}=0$ for $i\neq j$;
\item $\vect_{x_i}\cdot\vect_{y_i}=\sqrt{1-(n-1)\delta^2}$;
\item $\vect_{x_i}\cdot\vect_{y_j}=\pm\delta$ for $i\neq j$.
\end{enumerate}
Now, let's consider an $\epsilon$-approximate match $(X,Y)$ in $(\calX,\calY)$. Suppose $y_i\in Y$. We show by contradiction that $x_i\in X$. Suppose $x_i\notin X$. Then $\dist(\vect_{y_i},\Span(\vect_{X}))=\sqrt{1-|X|\delta^2}\geq \sqrt{1-n\delta^2}=\sqrt{1-2\epsilon^2}>\epsilon$, which is a contradiction. Therefore, as long as $y_i\in Y$, we know $x_i\in X$. For the same reason, as long as $x_i\in X$, we know $y_i\in Y$. Therefore, $\exists S\subseteq\{1,2,\cdots,n\}$ s.t. $X=\{x_i:i\in S\},Y=\{y_i:i\in S\}$.

Now we show that $(\{x_i:i\in S\},\{y_i:i\in S\})$ is an $\epsilon$-approximate match if and only if $|S|\geq\frac n2$. Actually, we have $\forall j\in S,\dist(\vect_{x_j},\Span(\{\vect_{y_i}:i\in S\}))=\dist(\vect_{y_j},\Span(\{\vect_{x_i}:i\in S\}))=\sqrt{(n-|S|)\delta^2}=\sqrt{\frac{2(n-|S|)}{n}}\epsilon$, and we know $\sqrt{\frac{2(n-|S|)}{n}}\epsilon\leq \epsilon$ if and only if $|S|\geq\frac n2$. Therefore, the number of simple matches is $\binom{n}{\lceil\frac n2\rceil}$, which is exponential in $n$.
\end{proof}

\section{Additional Experiment Results}
\label{sec:exp}
\subsection{Different Architectures}
Besides ResNet18, VGG16 and ResNet34, we also train differently initialized neural networks like VGG11 and VGG13. Figure \ref{fig:max_mat} shows the maximum matching similarities of all the layers of VGG13 and ResNet10. The result is similar to what is mentioned in Section 5, which implies that our conclusions might apply on most modern deep networks.
\begin{figure}[H]
\centering
\begin{subfigure}{.5\textwidth}
  \centering
  \includegraphics[width=\linewidth]{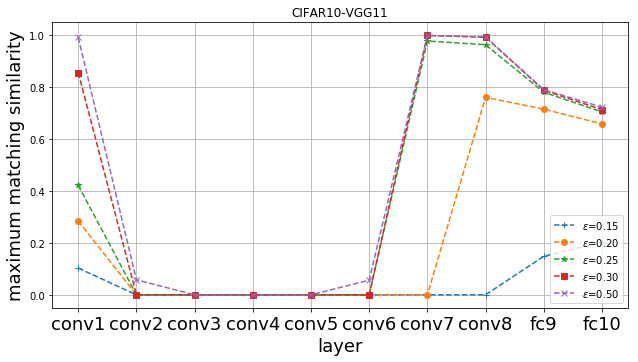}
  \caption{CIFAR10-VGG11}
\end{subfigure}%
\hfill
\begin{subfigure}{.5\textwidth}
  \centering
  \includegraphics[width=\linewidth]{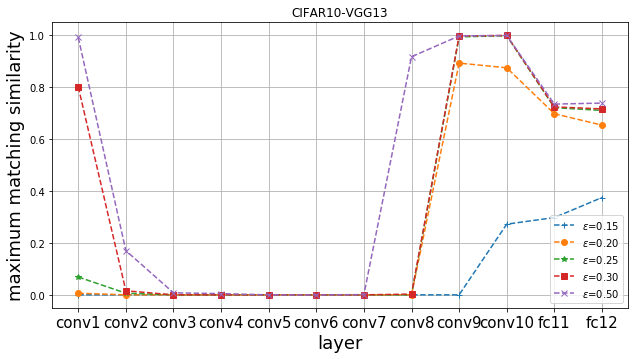}
  \caption{CIFAR10-VGG13}
\end{subfigure}
\hfill
\begin{subfigure}{.5\textwidth}
  \centering
  \includegraphics[width=\linewidth]{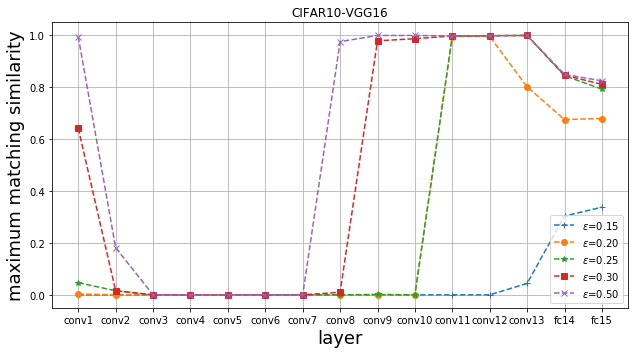}
  \caption{CIFAR10-VGG16}
\end{subfigure}

\caption{Maximum matching similarities of all the layers of different architectures under various $\epsilon$.}
\label{fig:max_mat}
\end{figure}

\subsection{Details of Architecture}
\begin{table}[H]
\centering
\begin{tabular}{ c|c|c|c|c|c|c }
	VGG & stage1(64) & stage2(128) & stage3(256)& stage4(512) & stage5(512) & accuracy\\
    \hline
    vgg11& conv$\times$1 & conv$\times$1 & conv$\times$1 & conv$\times$1 & conv$\times$1 & 91.10\% \\
    vgg13& conv$\times$2 & conv$\times$2 & conv$\times$2 & conv$\times$2 & conv$\times$2 & 92.78\% \\
    vgg16& conv$\times$2 & conv$\times$2 & conv$\times$3 & conv$\times$3 & conv$\times$3 & 92.84\% \\
    \hline
     ResNet & stage1(64) & stage2(128) & stage3(256) & \multicolumn{2}{|c|}{stage4(512)}  & accuracy\\
     \hline
    resnet18& block$\times$2 & block$\times$2 & block$\times$2 &  \multicolumn{2}{|c|}{block$\times$2}& 94.24\% \\
    resnet34& block$\times$3 & block$\times$4 & block$\times$6 & \multicolumn{2}{|c|}{block$\times$3} & 95.33\% \\
\end{tabular}
\caption{Structure of architecture (fully connected layers are omitted) and validation accuracy}
\end{table}

\subsection{Max Match during Training}
\begin{figure}[H]
\centering
\begin{subfigure}{.5\textwidth}
  \centering
  \includegraphics[width=\linewidth]{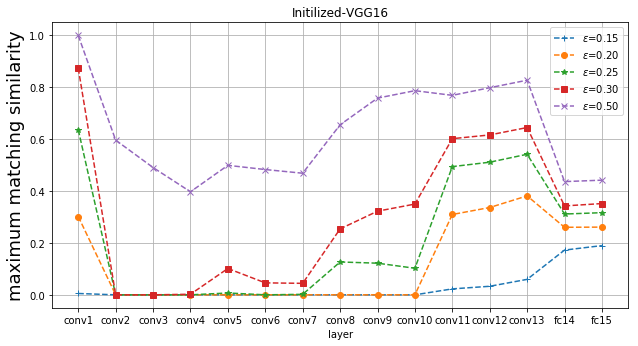}
  \caption{Two untrained networks using the same distribution for random initialization}
\end{subfigure}%
\hfill
\begin{subfigure}{.5\textwidth}
  \centering
  \includegraphics[width=\linewidth]{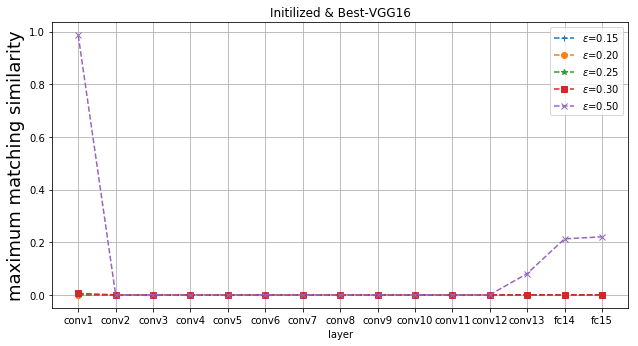}
  \caption{Untrained vs. Fine-Trained}
\end{subfigure}
\caption{Maximum match similarity between networks at different stages. Figure(a) shows the similarity of two untrained network. Figure(b) shows the similarity of the same network at different stages. }
\label{fig:training}
\end{figure}
As can be seen in Figure \ref{fig:training}, after initialization, two untrained networks show similarity to some extent, while the untrained network and the trained network are much more different with each other. We believe the similarity between two untrained networks is due to the fact that the initialized networks take all its parameters from the same distributions like Xaiver initialization. Since we have many neurons that can be seen as drawn from the same distribution, we may observe phenomenon like this. Meanwhile, after training, the network turns to be quite different.

\subsection{Neuron Visualization}
Our experiments show that there exist a few simple matches of small size. We randomly choose a pair of networks to produce two simple matches for two fully connected layers and visualize them following the common practice. Figure \ref{fig:vis_fc2} and \ref{fig:vis_fc1} visualize top 9 images that maximize the activation of each neuron in simple matches.

\begin{figure}
\centering
  \includegraphics[width=\linewidth]{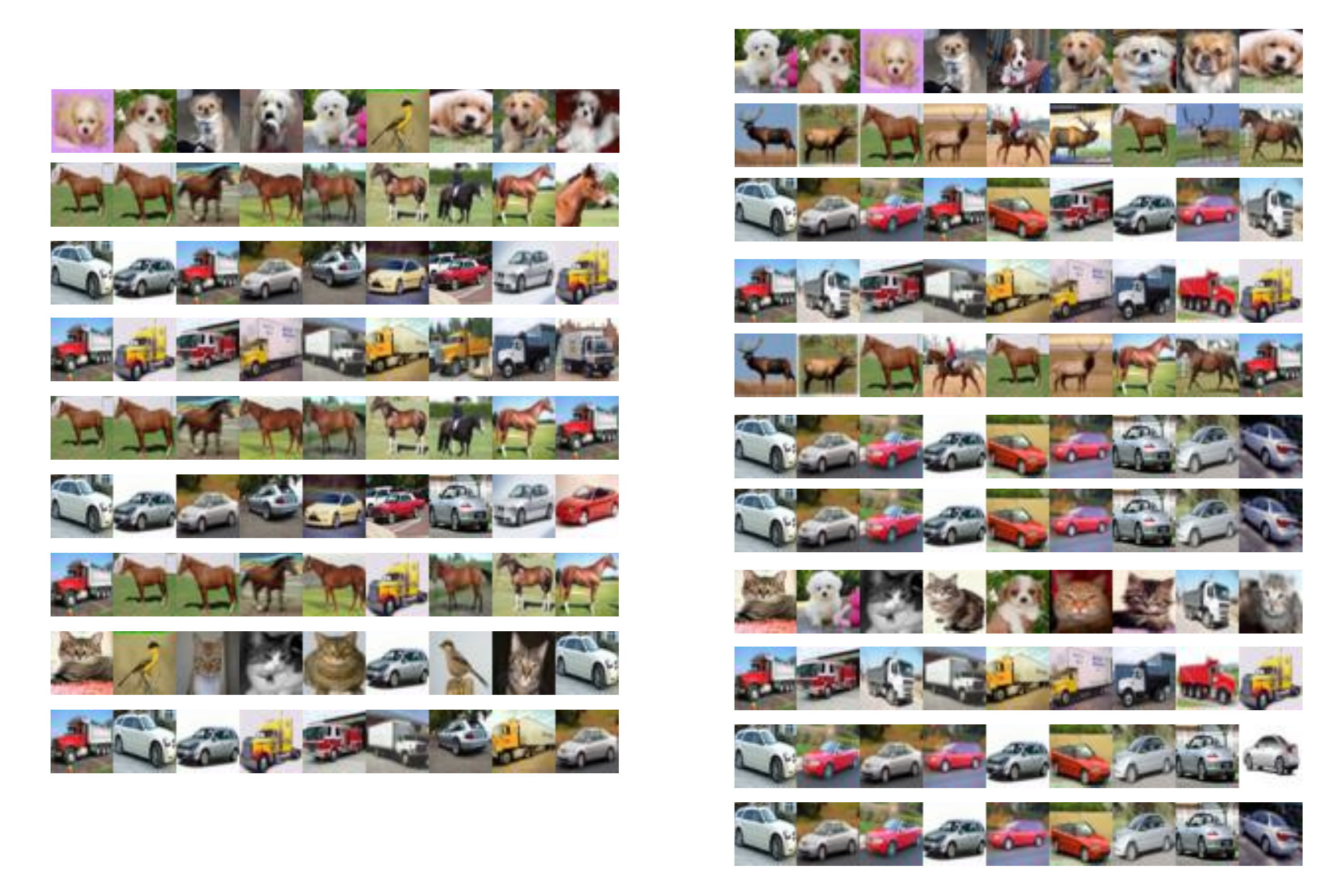}
  \caption{Visualization of neurons in \textit{fc2}. Each row includes top 9 images that maximize the activation of one neuron. The neurons in the same network are illustrated on the same side.}
\label{fig:vis_fc2}
\end{figure}

\begin{figure}
\centering
  \includegraphics[width=\linewidth]{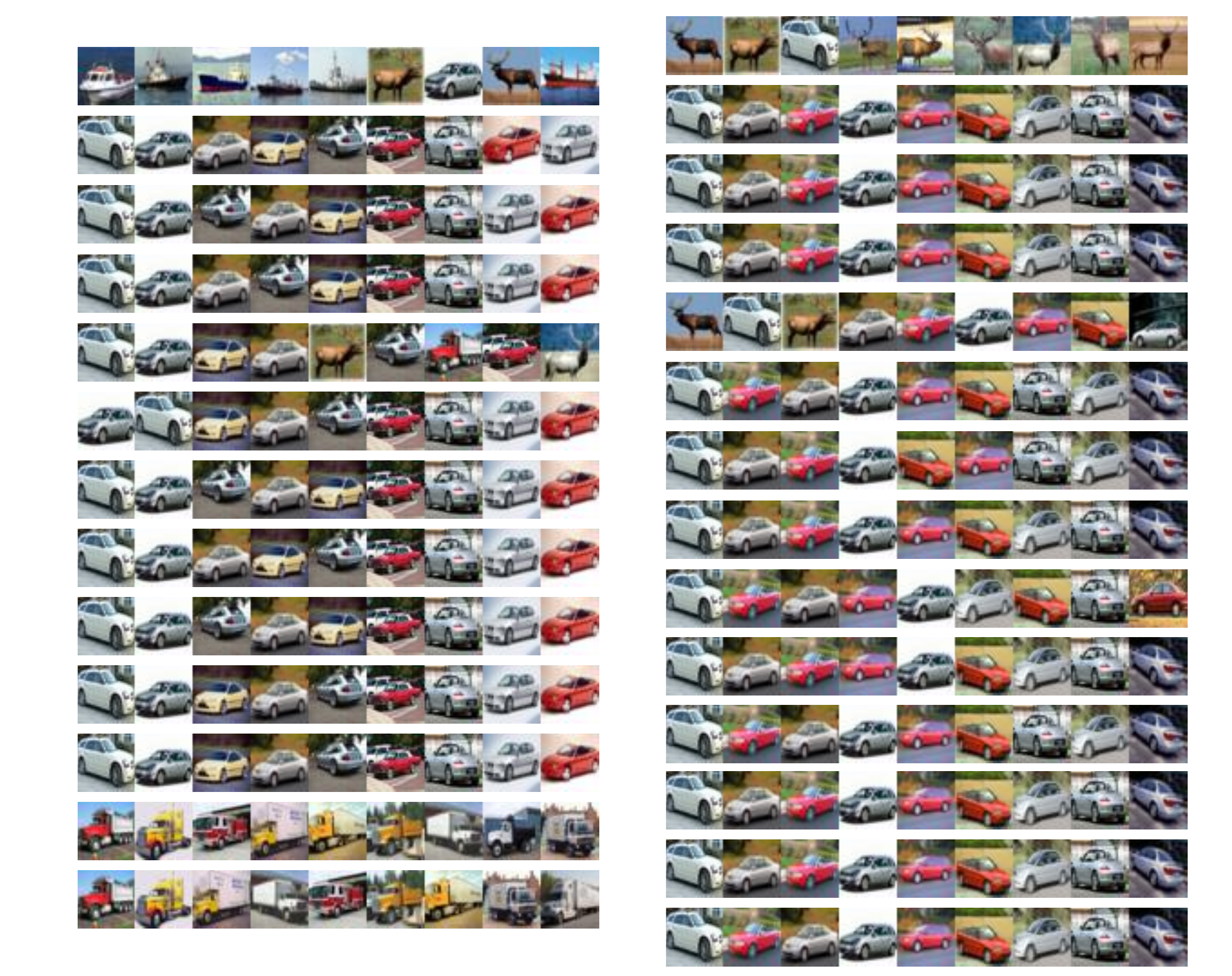}
  \caption{Visualization of neurons in \textit{fc1}. Each row includes top 9 images that maximize the activation of one neuron. The neurons in the same network are illustrated on the same side.}
  \label{fig:vis_fc1}
\end{figure}

\end{document}